\newtheorem{theorem}{Theorem}
\newtheorem{assumption}[theorem]{Assumption}
\newtheorem{proposition}[theorem]{Proposition}
\newtheorem{lemma}[theorem]{Lemma}
\def\eqref#1{equation~\ref{#1}}
\def\1{\bm{1}}
\DeclareMathAlphabet{\mathsfit}{\encodingdefault}{\sfdefault}{m}{sl}
\SetMathAlphabet{\mathsfit}{bold}{\encodingdefault}{\sfdefault}{bx}{n}
\DeclareMathOperator*{\argmax}{arg\,max}
\DeclareMathOperator*{\argmin}{arg\,min}
\begin{document}

\title{Soft-IntroVAE: Analyzing and Improving the Introspective Variational Autoencoder}

\author{Tal Daniel\\
Department of Electrical Engineering\\
Technion, Haifa, Israel\\
{\tt\small taldanielm@campus.techion.ac.il}
\and
Aviv Tamar\\
Department of Electrical Engineering\\
Technion, Haifa, Israel\\
{\tt\small avivt@technion.ac.il}
}

\maketitle

\begin{abstract}
The recently introduced introspective variational autoencoder (IntroVAE) exhibits outstanding image generations, and allows for amortized inference using an image encoder. The main idea in IntroVAE is to train a VAE adversarially, using the VAE encoder to discriminate between generated and real data samples. However, the original IntroVAE loss function relied on a particular hinge-loss formulation that is very hard to stabilize in practice, and its theoretical convergence analysis ignored important terms in the loss.
In this work, we take a step towards better understanding of the IntroVAE model, its practical implementation, and its applications. We propose the Soft-IntroVAE, a modified IntroVAE that replaces the hinge-loss terms with a smooth exponential loss on generated samples. This change significantly improves training stability, and also enables theoretical analysis of the complete algorithm. Interestingly, we show that the IntroVAE converges to a distribution that minimizes a sum of KL distance from the data distribution and an entropy term. We discuss the implications of this result, and demonstrate that it induces competitive image generation and reconstruction. Finally, we describe two applications of Soft-IntroVAE to unsupervised image translation and out-of-distribution detection, and demonstrate compelling results. Code and additional information is available on the project website - \url{https://taldatech.github.io/soft-intro-vae-web}.
\end{abstract}


\begin{figure*}
     \centering
     \begin{subfigure}[b]{0.52\textwidth}
         \centering
         \includegraphics[width=\textwidth]{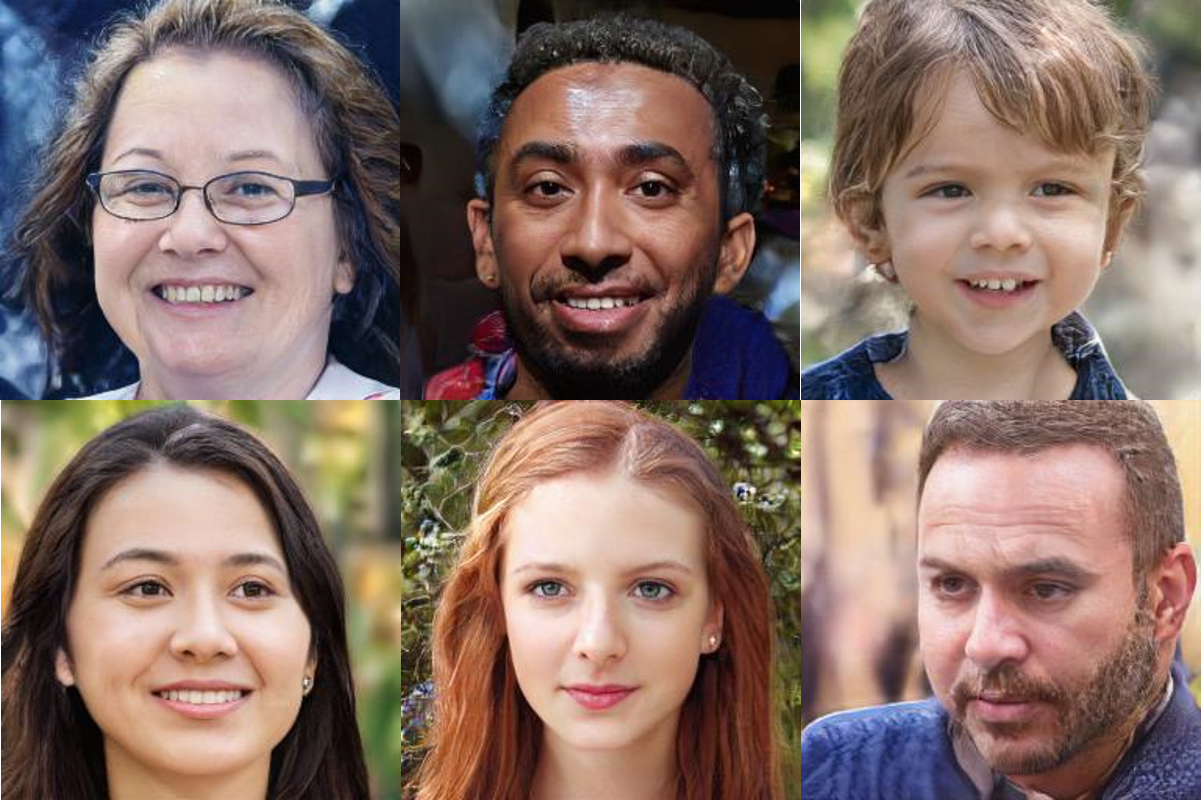}
         \caption{FFHQ dataset -- samples from S-IntroVAE (FID: 17.55).}
         \label{fig:samples_ffhq}
     \end{subfigure}
     \begin{subfigure}[b]{0.4\textwidth}
         \centering
         \includegraphics[width=0.6\textwidth]{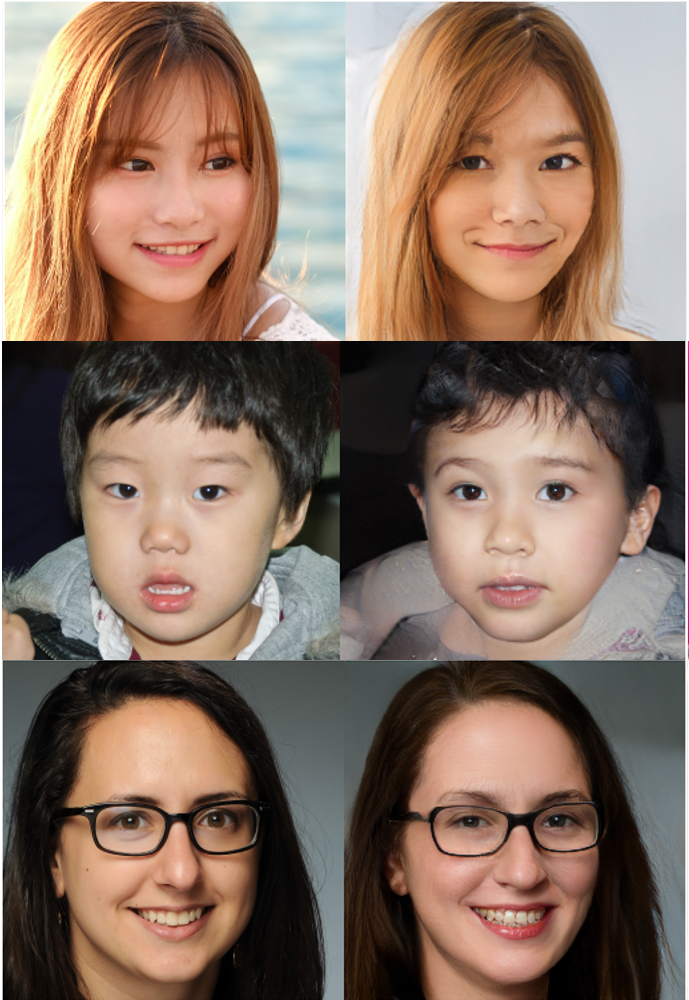}
         \caption{FFHQ -- reconstructions.}
         \label{fig:ffhq_recons}
     \end{subfigure}
     \vspace{-0.5em}
        \caption{Generated samples (left) and reconstructions (right) of test data (left: real, right: reconstruction) from a style-based S-IntroVAE trained on FFHQ at 256x256 resolution. }
        \label{fig:image_samples_ffhq}
        \vspace{-1.5em}
\end{figure*}

\section{Introduction}

Two popular approaches for learning deep generative models are generative adversarial training (e.g., GANs~\cite{goodfellow2014generative}), and variational inference (e.g., VAEs~\cite{kingma2014autoencoding}). VAEs are known to have a stable training procedure, display resilience to mode collapse, and enable amortized inference. Moreover, the VAE's inference module makes them prominent in many domains, such as learning disentangled representations \cite{bouchacourt2017multi} and reinforcement learning \cite{ha2018world}. 
GANs, on the other hand, lack an inference module, but are capable of generating images of higher quality and are popular in computer vision applications, but can suffer from  training instability and low sampling diversity (mode collapse)~\cite{mescheder2018training, kodali1705convergence}.

Narrowing the gap between VAEs and GANs has been the aim of many works, in an attempt to combine the best of both worlds: building a stable and easy to train generative model that allows efficient amortized inference and high-quality sampling~\cite{larsen2015autoencoding, dumoulin2016adversarially,makhzani2015adversarial, zhao2016energybased, pidhorskyi2020adversarial}. While progress has been made, the search for better generative models is an active research field.

Recently, Huang et al.~\cite{introvae18} proposed IntroVAE -- a VAE that is trained adversarially, and demonstrated outstanding image generation results. 
A key idea in IntroVAE is introspective discrimination -- instead of training a separate discriminator network to discriminate between real and generated samples, as in a GAN, the output of the encoder acts as the discriminatory signal, based on the Kullback-Leibler divergence between the approximate posterior of a sample and the prior latent distribution. Intuitively, this discriminatory signal can be understood as making the generated samples less likely, as their posterior is more distant from the prior.

Importantly, IntroVAE uses a \textit{hard margin}, $m$, as a threshold on the KL divergence for which above it a `fake' sample no longer affects the loss function. The hard margin approach leads to difficult training in practice -- the optimization process is very sensitive to the values of $m$, and extensive tuning is required to find a good, stable, margin. On the theoretical side, \cite{introvae18} proved that IntroVAE converges to the data distribution, but their analysis ignored several terms in the loss function that are important for correct operation of the algorithm.

We aim to provide a better understanding of the introspective training paradigm and improve its stability. To that end, we introduce Soft-IntroVAE (S-IntroVAE) -- an introspective VAE that utilizes the evidence lower bound (ELBO) as the discriminatory signal, and replaces the hard margin with a soft threshold function, making it more stable to optimize. This new formulation allows us to analyze the convergence properties of the complete algorithm, and provide new insights into introspective VAEs.

Interestingly, our theoretical analysis shows that, in contrast to the original IntroVAE, the S-IntroVAE encoder converges to the true posterior, thus maintaining the inference capabilities of the VAE framework. Our analysis also reveals that the S-IntroVAE model converges to a generative distribution that minimizes a sum of KL divergence from the data distribution and an entropy term, in contrast to GANs, where the distribution of the generator converges to the data distribution. We further analyze the consequences of this result and its practical implications, and show that the model produces high-quality generations from a distribution with a sharp support. 
In practice, S-IntroVAE is much more stable than IntroVAE, as it does not involve the sensitive threshold parameter, and we rigorously validate this claim in experiments, ranging from inference on 2D distributions to high-quality image generation.

We further demonstrate a practical application of our model to the task of unsupervised image translation. We exploit the fact that S-IntroVAE has both good generation quality and strong inference capabilities, and combine it with an encoder-decoder architecture that induces disentanglement. Inductive bias of this sort is required for unsupervised learning of disentangled representations \cite{JMLR:v21:19-976}, and our results show that using this architecture, S-IntroVAE is indeed capable of successfully transferring content between two images without any supervision.

Finally, the adversarial training of S-IntroVAE effectively assigns a lower likelihood to out-of-distribution (OOD) samples, allowing to use the model for OOD detection. While recent work claimed that likelihood-based models, such as VAEs, are incapable of distinguishing between images from different datasets \cite{nalisnick2018deep}, we show that a well-trained S-IntroVAE model exhibits near-perfect detection on all the tasks investigated in \cite{nalisnick2018deep}, significantly outperforming the standard VAE.

Our contribution is summed as follows: (1) We propose Soft-IntroVAE, a modification of the original IntroVAE that that utilizes the evidence lower bound (ELBO) as a discriminatory signal, and does not require a hard-margin threshold; (2) We provide a deeper theoretical understanding of introspective VAEs; (3) We validate that training our model is significantly more robust than training the original IntroVAE; (4) We show that our method is capable of high-quality image synthesis; (5) We demonstrate practical applications of our model to the tasks of unsupervised image translation and OOD detection.

\section{Related Work}
Studies on enhancing VAEs can be divided to methods that either improve the network's architecture~\cite{vahdat2020nvae, sonderby2016ladder}, incorporate stronger priors~\cite{tomczak2018vae, razavi2019generating, kalatzis2020variational}, add regularization~\cite{ghosh2019variational, xu2020learning}, or incorporate adversarial objectives~\cite{introvae18, makhzani2015adversarial, Han_2020, pidhorskyi2020adversarial, Heljakka_2020, dumoulin2016adversarially, larsen2015autoencoding}. Our work belongs to the latter group.
Adversarial Autoencoders~\cite{makhzani2015adversarial} add an adversarial loss on the latent code using a discriminator in addition to the typical encoder/decoder training.  VAE/GAN chains VAEs and GANs by adding a discriminator on top of the decoder~\cite{larsen2015autoencoding}. BiGAN~\cite{donahue2016adversarial} and ALI~\cite{dumoulin2016adversarially} simulate autoencoding by training with adversarial objectives in both the latent and data spaces. VEEGAN~\cite{srivastava2017veegan} introduces an additional 'reconstructor' network that is trained to map the real data to a Gaussian and to approximately invert the generator network.
Recently, \cite{pidhorskyi2020adversarial} proposed ALAE, a latent adversarial autoencoding method that adds a style-based encoder to the generator of StyleGAN~\cite{karras2020analyzing}. 
AGE~\cite{ulyanov2017takes}, IAN~\cite{brock2016neural}, PIONEER~\cite{Heljakka_2019}, B-PIONEER~\cite{Heljakka_2020}, and IntroVAE~\cite{introvae18} propose an introspective training of VAEs, and the latter is the current state of the art in image generation.
While the idea of introspective training is shared between the aforementioned methods, the difference is summed as follows: in AGE, the adversarial game between the encoder and generator is performed in the latent space (minimizing and maximizing divergences). In addition, the encoder minimizes the reconstruction error in \textit{data-space} (e.g. pixel-space), while the generator minimizes the reconstruction error in the \textit{latent-space}. PIONEER's objective is similar to AGE's but uses the cosine distance applied to reconstructions in the latent space. In IntroVAE, as in the previous methods, the encoder also maximizes divergence of the generated data latent representation, but uses a hard-margin over it, and both encoder and decoder minimize the reconstruction error in data-space, as in VAE. Unlike the previous methods, the introspective training in IAN uses a discriminator that shares weights with the encoder and the objective function is similar to VAE/GAN.
In this work, we focus on IntroVAE, as it is state-of-the-art in image synthesis.
We provide a deeper theoretical analysis of the introspective training and propose a stable algorithm that is capable of high-quality image generation.

Table \ref{tab:work_comp}, originally devised by \cite{pidhorskyi2020adversarial}, compares between different autoencoding methods based on the following properties: 
(a) how does the model learn the distribution of the data (i.e., by similarity to the original data or in an adversarial manner)?; (b) does the model impose a distribution type on the latent space (e.g., a Gaussian) or is the latent space prior learned? and (c) is the autoencoding reconstruction error measured (termed reciprocity) with respect to the data or the latent space?  
Soft-IntroVAE is a modification of IntroVAE, and shares similar high-level characteristics, as shown in the table.

\begin{table}[t!]
  \centering
  \resizebox{0.45\textwidth}{!}{
    \footnotesize
  \begin{tabular}{ |c||c|c|c|  }
 \hline
Autoencoding method& (a) Data & (b) Latent & (c) Reciprocity \\
 & Distribution & Distribution & Space\\
 \hline
 VAE~\cite{kingma2014autoencoding}   & similarity  & imposed/divergence & data \\
 AAE~\cite{makhzani2015adversarial} &  similarity  & imposed/adversarial  & data \\
 VAE/GAN~\cite{larsen2015autoencoding} & similarity & imposed/divergence & data \\
 VampPrior~\cite{tomczak2018vae} & similarity & learned/divergence & data \\
 BiGAN~\cite{donahue2016adversarial}  & adversarial & imposed/adversarial & adversarial \\
 ALI~\cite{dumoulin2016adversarially} &  adversarial & imposed/adversarial & adversarial \\
 VEEGAN~\cite{srivastava2017veegan} & adversarial & imposed/divergence & latent \\
 AGE~\cite{ulyanov2017takes} & adversarial  & imposed/adversarial & latent \\
  PIONEER~\cite{Heljakka_2019} & adversarial  & imposed/adversarial & latent \\
 IntroVAE~\cite{introvae18} & adversarial & imposed/adversarial & data \\
 ALAE~\cite{pidhorskyi2020adversarial} & adversarial & learned/divergence & latent \\
 \textbf{Soft-IntroVAE (ours)} & adversarial & imposed/adversarial & data\\
 \hline
\end{tabular}}
\caption{Comparison between AE methods based on criteria used: (a) for matching the real to the synthetic data distribution; (b) for setting/learning the latent distribution; (c) for which space reciprocity is achieved.}
\label{tab:work_comp}
\vspace{-3mm}
\end{table}

\section{Background}\label{sec:background}
We formulate our problem under the variational inference setting \cite{kingma2014autoencoding}, given some data $x\sim p_{data}(x)$, where $p_{data}$ denotes the data distribution, one aims to fit the parameters $\theta$ of a latent variable model $p_\theta(x) = \mathbb{E}_{p(z)} \left[p_\theta (x|z)\right]$, where the prior $p(z)$ is given, and $p_\theta (x|z)$ is learned. For general models, a typical objective is the maximum-likelihood $\max_\theta \log p_\theta(x)$, which is intractable, and can be approximated using variational inference methods. 

\paragraph{Evidence lower bound (ELBO)}
For some model defined by $p_\theta (x|z)$ and $p(z)$, let $p_\theta(z|x)$ denote the posterior distribution that the model induces on the latent variable. The evidence lower bound (ELBO) states that for any approximate posterior distribution $q(z|x)$: 
\begin{equation}\label{eq:ELBO_def}
\begin{split}
    \log p_\theta(x) &\geq \mathbb{E}_{q(z|x)}\ \left[\log p_\theta(x|z)\right] - KL(q(z|x) \Vert p(z)) \\ &\doteq ELBO(x), \raisetag{1.5em}
\end{split}
\end{equation}
where the Kullback-Leibler (KL) divergence is
$KL(q(z|x) \Vert p(z)) = \mathbb{E}_{q(z|x)} \left[ \log \frac{q(z|x)}{p(z)} \right]$. In the variational autoencoder (VAE,~\cite{kingma2014autoencoding}), the approximate posterior is represented as $q_\phi(z|x) = \mathcal{N}(\mu_\phi(x), \Sigma_\phi(x))$ for some neural network with parameters $\phi$, the prior is $p(z) = \mathcal{N}(\mu_0,\Sigma_0)$, and the ELBO can be maximized using the \textit{reparameterization trick}. Since the resulting model resembles an autoencoder, the approximate posterior $q_\phi(z|x)$ is also known as the \emph{encoder}, while $p_\theta(x|z)$ is termed the \emph{decoder}. Typically, $p_\theta(x|z)$ is modelled as a Gaussian distribution, and in this case $\log p_\theta(x|z)$ is equivalent to the mean-squared error (MSE) between $x$ and the mean of $p_\theta(x|z)$. In the following, the term reconstruction error refers to $\log p_\theta(x|z)$.

\paragraph{Introspective VAE (IntroVAE)}
The Introspective VAE adds an adversarial objective to the VAE training, where the intuition, following \cite{introvae18}, is that since the VAE maximizes a lower bound of the data likelihood, there is no guarantee that points outside the support of $p_{data}$ will not be assigned high likelihood. In~\cite{introvae18}, the KL term in the ELBO is seen as an `energy' of a sample. Inspired by energy-based GANs (EBGAN, ~\cite{zhao2016energybased}), the encoder is encouraged to classify between the generated and real samples by \textit{minimizing} the KL of latent distribution of real samples and the prior, and \textit{maximizing} the KL of generated ones. The decoder, on the other hand, is trained to reconstruct real data samples using the standard ELBO, and to minimize the KL of generated samples that go through the encoder. Consider a real sample $x$ and a generated one $D_{\theta}(z)\sim p_\theta(x|z)$,  
and let $KL(E_\phi(D_\theta(z)))=KL(q_{\phi}(\cdot|D_\theta(z)) \Vert p(\cdot))$ denote the KL of the generated sample that goes through the encoder. The adversarial objective, which is to be \textit{maximized}, for $x$ and $z$ is given by:
\begin{equation}
    \label{eq:introvae_practice}
    \begin{split}
        \mathcal{L}_{E_{\phi}}(x,z) & = ELBO(x) - [m - KL(E_{\phi}(D_{\theta}(z)))]^{+}, \\
        \mathcal{L}_{D_{\theta}}(x,z) & = ELBO(x) -  KL(E_{\phi}(D_{\theta}(z))),
    \end{split}
\end{equation}
where $\mathcal{L}_{E_{\phi}}$ is the objective function for the encoder, $\mathcal{L}_{D_{\theta}}$ is the objective function for the decoder, and $[\cdot]^{+} = \max(0, \cdot)$. The hard threshold $m$ in $\mathcal{L}_{E_{\phi}}$ limits the KL of generated samples in the objective, and is crucial for the analysis and practical implementation of IntroVAE. The encoder and decoder are trained simultaneously using stochastic gradient descent (SGD) on mini-batches of $x$ and $z$.

The theoretical properties of IntroVAE were studied in~\cite{introvae18} under a simplified objective that ignores the reconstruction terms in the ELBO, and omits the decoder loss for real samples, as given by:
\begin{equation}
    \label{eq:introvae_theory}
    \begin{split}
        \mathcal{L}_{E_{\phi}}(x,z) & = -KL(E_{\phi}(x)) - [m - KL(E_{\phi}(D_{\theta}(z)))]^{+}, \\
        \mathcal{L}_{D_{\theta}}(z) & = -KL(E_{\phi}(D_{\theta}(z))).
    \end{split}
\end{equation}
For this objective, \cite{introvae18} showed that a Nash equilibrium is obtained when the distribution $p(x)=\mathbb{E}_z [p_\theta(x|z)]$ is exactly $p_{data}(x)$, showing the soundness of the model in terms of sample generation. However, the EBGAN-based analysis did not make claims about the inference capabilities of the model, and in particular, it is possible that at the Nash equilibrium the encoder distribution $q_{\phi}(z|x)$ is different from the true posterior $p_{\theta}(z|x)$.\footnote{The theoretical results in \cite{introvae18}, as the results in our work, are proved under the non-parametric setting. We nevertheless use the parametric setting notation to avoid introducing additional notation, and understand from the context that $\theta$ and $\phi$ do not carry meaning in the non-parametric setting.}

\section{Soft-IntroVAE}

In this section, we propose a new introspective VAE model that mitigates two shortcomings of IntroVAE -- the training instability due to the hard threshold function (cf. \eqref{eq:introvae_practice}), and the difficulty in analysing the full optimization objective (cf. \eqref{eq:introvae_theory}). We term our model \textit{Soft-IntroVAE} or S-IntroVAE in short.

Recall that $p_{data}(x)$ denotes the data distribution and $p(z)$ represents some prior distribution over latent variables $z$. The objective functions for the encoder, $E_{\phi}$, and the decoder,  $D_{\theta}$, for samples $x$ and $z$ in our model are given by:
\begin{equation}
\label{eq:sintrovae_loss}
\begin{split}
    \mathcal{L}_{E_{\phi}}(x,z) &= ELBO(x) - \frac{1}{\alpha}\exp(\alpha ELBO(D_{\theta}(z))), \\
    \mathcal{L}_{D_{\theta}}(x,z) &= ELBO(x) + \gamma ELBO(D_{\theta}(z)),
\end{split}
\end{equation}
where $ELBO(x)$ is defined in \eqref{eq:ELBO_def}, $ELBO(D_{\theta}(z))$ is defined similarly but
with $D_{\theta}(z)\sim p_\theta(x|z)$ replacing $x$ in \eqref{eq:ELBO_def},
and $\alpha\geq 0 $ and $\gamma \geq 0$ are hyper-parameters. Note that Eq. \ref{eq:sintrovae_loss} portrays a game between the encoder and the decoder: the encoder is induced to distinguish, through the ELBO value, between real samples (high ELBO) and generated samples (low ELBO), while the decoder is induced to generate samples the `fool' the encoder. The complete S-IntroVAE objective takes an expectation of the losses above over real and generated samples:
\begin{equation}
\label{eq:sintrovae_loss_expectation}
\begin{split}
    \mathcal{L}_{E_{\phi}} &= \mathbb{E}_{x\sim p_{data}, z\sim p(z)}\left[ \mathcal{L}_{E_{\phi}}(x,z) \right],\\
    \mathcal{L}_{D_{\theta}} &= \mathbb{E}_{x\sim p_{data}, z\sim p(z)}\left[ \mathcal{L}_{D_{\theta}}(x,z)\right].
\end{split}
\end{equation}
When the posterior is Gaussian, the losses in \eqref{eq:sintrovae_loss_expectation} can be optimized effectively by SGD using the reparametrization trick.

There are two key differences between S-IntroVAE and IntroVAE. The first is that we utilize the complete ELBO term instead of just the KL, which will allow us to provide a complete variational inference-based analysis. The second difference is replacing the hard threshold in Eq. \ref{eq:introvae_practice} with a soft exponential function over the ELBO, henceforth denoted \textit{expELBO}. The effect of both of these functions is similar -- they induce a separation between the posterior distributions over latent variables of real samples and generated ones. However, as we report in the sequel, the soft threshold in Eq. \ref{eq:sintrovae_loss} is much easier to optimize, and results in improved training stability. 

At this point, the reader may question what minimizing the ELBO for generated samples (through the expELBO in \eqref{eq:sintrovae_loss}) means, as minimizing a lower bound on the log-likelihood does not imply that the likelihood of generated samples decreases~\cite{daniel2019deep}. In addition, for a decoder that produces near-perfect generated samples, it may seem that the expELBO term seeks to reduce sample quality. In the following, we answer these questions by analyzing the equilibrium of \eqref{eq:sintrovae_loss_expectation}.

\subsection{Analysis}
In this section we analyze the Nash equilibrium of the game in \eqref{eq:sintrovae_loss_expectation}. We consider a non-parametric setting, where the encoder and decoder can represent any distribution. This is a typical setting for analyzing adversarial generative models \cite{goodfellow2014generative,introvae18}. For simplicity, we focus on discrete distributions, but our analysis easily extends to the continuous case. Also, to ease the presentation, we focus on the case $\alpha=1$. The analysis for general $\alpha$ provides similar insights and is provided in the supplementary material, along with detailed proofs.

We introduce the following notation. The encoder is represented by the approximate posterior distribution $q \doteq q(z|x)$. The decoder is represented using $d \doteq p_d(x|z)$. These are the controllable distributions in our generative model. The latent prior is denoted $p(z)$ and is not controlled. Slightly abusing notation, we also denote $p_d(x) = \mathbb{E}_{p(z)}[p_d(x|z)]$ as the distribution of generated samples. For some distribution $p(x)$, let  $H(p) = -\mathbb{E}\left[ \log p(x) \right]$ denote its Shannon entropy.  

We define $d^*$ as follows:
\begin{equation}\label{eq:dstar_def}
    d^* \in \argmin_{d} \left\{KL(p_{data} \| p_{d}) + \gamma H(p_{d}(x)) \right\}.
\end{equation}
Note that for $\gamma=0$, we have that $p_{d^*} = p_{data}$. For $\gamma>0$, however, $p_{d^*}$ represents a balance between closeness to $p_{data}$ and low entropy. 
We make the following assumption.
\begin{assumption}\label{ass:ass_1}
For all $x$ such that $p_{data}( x)>0$ we have that $p_{d^*}(x) \leq \sqrt{p_{data}(x)}$.
\end{assumption}
Assumption \ref{ass:ass_1} can be seen as a condition of the closeness between $p_{d^*}$ and $p_{data}$, and essentially requires that the effect of the entropy minimization term in \eqref{eq:dstar_def} is limited. Intuitively, if $\gamma$ is small enough, we should always be able to satisfy Assumption \ref{ass:ass_1}. This is established in the next result.
\begin{proposition}\label{prp:prp_1}
For any $p_{data}$, there exists $\gamma>0$ such that $p_{d^*}$, as defined in \eqref{eq:dstar_def}, satisfies Assumption \ref{ass:ass_1}.
\end{proposition}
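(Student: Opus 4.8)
The plan is to treat the proposition as a perturbation/continuity statement around $\gamma=0$. At $\gamma=0$ we already know $p_{d^*}=p_{data}$, and since every probability value obeys $p_{data}(x)\le\sqrt{p_{data}(x)}$ with \emph{strict} inequality whenever $0<p_{data}(x)<1$, the desired bound holds at $\gamma=0$ with room to spare. I would then argue that the minimizer $p_{d^*}$ of \eqref{eq:dstar_def} moves continuously enough in $\gamma$ that this slack is not exhausted for small $\gamma>0$. Concretely, it suffices to prove that $p_{d^*}\to p_{data}$ uniformly over the (finite) support as $\gamma\to 0$, and then to cash in the strict gap between $p_{data}(x)$ and $\sqrt{p_{data}(x)}$.

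First I would establish the convergence from the optimality of $p_{d^*}$ alone, avoiding any explicit (transcendental) characterization of the minimizer. Writing $F_\gamma(p)=KL(p_{data}\|p)+\gamma H(p)$, optimality gives $F_\gamma(p_{d^*})\le F_\gamma(p_{data})=\gamma H(p_{data})$, where I used $KL(p_{data}\|p_{data})=0$. Since $H(p_{d^*})\ge 0$, this immediately yields
\[
KL(p_{data}\|p_{d^*}) \le \gamma H(p_{data}).
\]
Applying Pinsker's inequality, $\|p_{data}-p_{d^*}\|_1 \le \sqrt{2\,KL(p_{data}\|p_{d^*})} \le \sqrt{2\gamma H(p_{data})}$, so in particular $\sup_x |p_{d^*}(x)-p_{data}(x)| \le \sqrt{2\gamma H(p_{data})}\to 0$ as $\gamma\to 0$.

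To finish, suppose first that $p_{data}$ is not a point mass, so $0<p_{data}(x)<1$ for every $x$ in its finite support; then $\delta := \min_{x:\,p_{data}(x)>0}\big(\sqrt{p_{data}(x)}-p_{data}(x)\big) > 0$. Choosing $\gamma < \delta^2/(2H(p_{data}))$ forces $\sup_x|p_{d^*}(x)-p_{data}(x)| < \delta$, whence $p_{d^*}(x) < p_{data}(x)+\delta \le \sqrt{p_{data}(x)}$ for all $x$ in the support, establishing Assumption \ref{ass:ass_1}. The degenerate case where $p_{data}$ is a point mass (so $H(p_{data})=0$) is handled separately and trivially, since there $p_{d^*}=p_{data}$ for every $\gamma$ and the inequality holds with equality. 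I expect the main obstacle to be not the quantitative estimate (Pinsker does essentially all the work) but the supporting regularity claims: that the $\argmin$ in \eqref{eq:dstar_def} is attained and that $p_{d^*}$ shares the support of $p_{data}$, which is what makes the comparison $F_\gamma(p_{d^*})\le F_\gamma(p_{data})$, the finiteness of the bounds, and the index set defining $\delta$ all legitimate. This follows from lower semicontinuity of $F_\gamma$ on the compact probability simplex together with the fact that $KL(p_{data}\|p)=+\infty$ whenever $p$ misses a mass point of $p_{data}$, but it should be stated carefully rather than taken for granted.
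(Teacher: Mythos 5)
Your proof is correct, and it shares the paper's broad outline: show $p_{d^*}$ is KL-close to $p_{data}$ when $\gamma$ is small, convert that to a pointwise bound via Pinsker's inequality, and then cash in the slack $\sqrt{p_{data}(x)}-p_{data}(x)$. The genuine difference is in how the KL bound is obtained. The paper invokes the Envelope theorem of Milgrom and Segal to argue that the value function $V(\gamma)=\min_{d}\left\{KL(p_{data}\|p_{d})+\gamma H(p_{d})\right\}$ is continuous in $\gamma$, hence $KL(p_{data}\|p_{d^*})\leq V(\gamma)\leq \epsilon$ for some unspecified small $\gamma$; this is purely qualitative. You instead get the same bound from the one-line optimality comparison $F_\gamma(p_{d^*})\leq F_\gamma(p_{data})=\gamma H(p_{data})$, which buys two things: it is more elementary (no external theorem), and it is quantitative, yielding the explicit admissible range $\gamma<\delta^2/\bigl(2H(p_{data})\bigr)$. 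Your separate treatment of the point-mass case also repairs a small gap in the paper's argument: the paper chooses $\epsilon$ so that $\sqrt{0.5\epsilon}\leq \min_{x:p_{data}(x)>0}\left\{p_{data}(x)^{1/(\alpha+1)}-p_{data}(x)\right\}$ and asserts such an $\epsilon>0$ exists because the minimand is nonnegative, but when $p_{data}$ is a point mass that minimum equals $0$ and no positive $\epsilon$ works; your observation that in this degenerate case $p_{d^*}=p_{data}$ for every $\gamma$ (so the assumption holds with equality) closes the case the paper silently skips. The regularity caveats you flag at the end (attainment of the $\argmin$ and that $p_{d^*}$ covers the support of $p_{data}$) are likewise left implicit in the paper's proof, so noting them explicitly is appropriate rather than a weakness of your argument.
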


We are now ready to state our main result -- that $p_{d^*}$ is an equilibrium point of the S-IntroVAE model.

\begin{theorem}\label{thm:equilibrium}
Let $d^*$ be defined as in \eqref{eq:dstar_def}. Denote $q^* = p_{d^*}(z|x)$. If Assumption \ref{ass:ass_1} holds, then $\left(q^*,  {d^*}\right)$ is a Nash equilibrium of \eqref{eq:sintrovae_loss_expectation}. 
\end{theorem}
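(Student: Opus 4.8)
The plan is to verify directly that the pair $(q^*, d^*)$ satisfies the two best-response conditions defining a Nash equilibrium of \eqref{eq:sintrovae_loss_expectation}: with the decoder fixed at $d^*$, the encoder $q^*$ maximizes $\mathcal{L}_{E_\phi}(\cdot, d^*)$; and with the encoder fixed at $q^*$, the decoder $d^*$ maximizes $\mathcal{L}_{D_\theta}(q^*, \cdot)$. The single algebraic identity driving both directions is the standard decomposition $ELBO_d(x;q) = \log p_d(x) - KL(q(\cdot|x)\,\|\,p_d(\cdot|x))$, where I write $ELBO_d(x;q)$ for the ELBO evaluated with decoder $d$ and approximate posterior $q$, and $p_d(z|x)$ for the exact posterior induced by $d$. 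I would also rewrite each generated-sample expectation $\mathbb{E}_{z\sim p(z)}[\,\cdot\,(D_\theta(z))]$ as an expectation over $x'\sim p_d(x)$, since the integrand depends on $z$ only through the generated sample; this marginalization is what lets the objectives decouple across $x$. Throughout I fix $\alpha=1$ as in the statement.

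For the decoder direction (the easier one), fix $q = q^*$ and apply the identity with data marginal $p_{data}$ to the real term and $p_d$ to the generated term. After using $\mathbb{E}_{x\sim p_{data}}[\log p_d(x)] = -KL(p_{data}\|p_d) - H(p_{data})$ and $\mathbb{E}_{x'\sim p_d}[\log p_d(x')] = -H(p_d)$, the objective collapses to
\begin{equation*}
\begin{aligned}
\mathcal{L}_{D_\theta}(q^*, d) = {}& -\big[KL(p_{data}\|p_d) + \gamma H(p_d)\big] - \mathbb{E}_{x\sim p_{data}}\big[KL(q^*(\cdot|x)\|p_d(\cdot|x))\big] \\
&{} - \gamma\,\mathbb{E}_{x'\sim p_d}\big[KL(q^*(\cdot|x')\|p_d(\cdot|x'))\big] - H(p_{data}).
\end{aligned}
\end{equation*}
The first bracket is maximized over $d$ exactly at $d^*$ by the definition in \eqref{eq:dstar_def}, while the two remaining nonpositive KL penalties attain their maximal value $0$ precisely when $p_d(z|x)=q^*(z|x)$. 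The key observation is that $d=d^*$ meets all three conditions simultaneously, because $q^* = p_{d^*}(z|x)$ by construction; hence $d^*$ maximizes every term at once and is a best response to $q^*$. This is also where the entropy term in \eqref{eq:dstar_def} enters naturally, explaining why the equilibrium generator is $p_{d^*}$ rather than $p_{data}$.

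For the encoder direction, fix $d = d^*$ and set $K(x) \doteq KL(q(\cdot|x)\,\|\,p_{d^*}(\cdot|x)) \ge 0$. Using the identity and the marginalization above, the objective becomes, up to a constant independent of $q$,
\begin{equation*}
\mathcal{L}_{E_\phi}(q, d^*) = \sum_x \Big[ -p_{data}(x)\,K(x) - p_{d^*}(x)^2\, e^{-K(x)} \Big] + \mathrm{const}.
\end{equation*}
Since the controls $q(\cdot|x)$ for distinct $x$ are independent and affect only $K(x)$, the maximization decouples into scalar problems $\max_{K\ge 0} g_x(K)$ with $g_x(K) = -p_{data}(x)K - p_{d^*}(x)^2 e^{-K}$. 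As $g_x'(K) = -p_{data}(x) + p_{d^*}(x)^2 e^{-K}$ is decreasing in $K$, the sign of $g_x'(0) = p_{d^*}(x)^2 - p_{data}(x)$ determines the optimizer. Assumption \ref{ass:ass_1} is exactly $p_{d^*}(x)^2 \le p_{data}(x)$, which forces $g_x'(0)\le 0$ and hence drives the constrained optimum to the boundary $K(x)=0$, i.e. $q(\cdot|x) = p_{d^*}(\cdot|x) = q^*(\cdot|x)$. This is the step where the assumption is indispensable: without it the encoder would prefer $K(x)>0$ wherever the model overshoots the data, pulling $q$ away from the posterior.

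I expect the main obstacle to be the treatment of $x$ outside the support of $p_{data}$ in the encoder direction, since there $g_x(K) = -p_{d^*}(x)^2 e^{-K}$ is strictly increasing and the encoder would prefer $K(x)\to\infty$, so that $q^*$ could fail to be a best response unless $p_{d^*}$ places no mass off $\mathrm{supp}(p_{data})$. I would therefore establish as a preliminary lemma that $\mathrm{supp}(p_{d^*}) \subseteq \mathrm{supp}(p_{data})$: any $d$ carrying mass $\epsilon>0$ at a point $x_0$ with $p_{data}(x_0)=0$ is strictly improved by moving that mass onto a point of $\mathrm{supp}(p_{data})$, which strictly decreases $KL(p_{data}\|p_{d})$ and, by a short computation on the concentration of a single coordinate, also strictly decreases $\gamma H(p_d)$; this contradicts the optimality of $d^*$ in \eqref{eq:dstar_def}. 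With the lemma in hand, every off-support $g_x$ vanishes identically, so $K(x)=0$ is (weakly) optimal there as well, and the two best-response arguments close. The extension to general $\alpha$ follows by the identical bookkeeping, with $e^{-K(x)}$ replaced by $e^{-\alpha K(x)}$ in the scalar objective and Assumption \ref{ass:ass_1} scaled accordingly.
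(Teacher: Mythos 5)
Your proposal is correct, and its core is the same argument the paper gives in Appendix \ref{apndx:sec_theory}: you use the decomposition $W(x;d,q)=\log p_d(x)-KL(q(\cdot|x)\,\|\,p_d(\cdot|x))$, rewrite the generated-sample expectations as expectations over $p_d$ (the paper's Radon--Nikodym step, Eqs.~\ref{eq_rk_1}--\ref{eq_rk_2}), reduce the encoder best response to independent scalar problems in $K(x)\ge 0$ (your $g_x(K)=-p_{data}(x)K-p_{d^*}(x)^2e^{-K}$ is exactly the paper's $g(y)=y-\tfrac{a}{\alpha}e^{\alpha y}$ with $y=-K$, $a=p_{d^*}^{\alpha+1}(x)/p_{data}(x)$, at $\alpha=1$), and prove the decoder best response by observing that $d^*$ simultaneously maximizes the $-[KL(p_{data}\|p_d)+\gamma H(p_d)]$ term (by \eqref{eq:dstar_def}) and zeroes the two nonpositive posterior-KL penalties. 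The genuine difference is your preliminary lemma $\mathrm{supp}(p_{d^*})\subseteq\mathrm{supp}(p_{data})$, which the paper does not prove but actually needs: in the proof of Lemma \ref{lem:max_L_q}, for $x$ with $p_{data}(x)=0$ the paper asserts that $KL=0$ maximizes $-\tfrac{1}{\alpha}p_d^{\alpha+1}(x)\exp(-\alpha KL)$, which is false whenever $p_d(x)>0$ --- that function is strictly increasing in the KL, so the encoder prefers $KL\to\infty$, which is precisely the obstacle you flagged; without ruling out off-support mass of $p_{d^*}$, the claimed best response (and hence the Nash property) would fail. Your mass-moving proof of the lemma is sound: transferring an off-support atom $b>0$ onto an on-support atom $a$ strictly decreases $KL(p_{data}\|p_d)$, and since $(a+b)\log(a+b)>a\log a+b\log b$ for $a,b>0$ it also decreases $H(p_d)$, contradicting optimality of $d^*$ in \eqref{eq:dstar_def} for any $\gamma\ge 0$. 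So your write-up is strictly more careful than the paper's on this point; everything else, including the bookkeeping for general $\alpha$ with $p_{d^*}(x)\le p_{data}(x)^{1/(\alpha+1)}$, matches the published argument.
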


Interestingly, Theorem \ref{thm:equilibrium} shows that the S-IntroVAE model does not converge to the data distribution, but to an entropy-regularized version of it. One should question the effect of such regularization, in light of the typical goal of generating samples that are similar to the data distribution. The experiments in Section \ref{sec:experiments}, on various 2-dimensional datasets, illustrate that S-IntroVAE learns distributions with sharper supports than a standard VAE, but without negative effects such as mode dropping. Our image experiments further support this statement.

We now contrast our analysis with the analysis of \cite{introvae18}. First, we note that ignoring the reconstruction terms in the analysis, as done by \cite{introvae18}, leads to significantly different insights. For example, if one removes the $ELBO(x)$ term from $\mathcal{L}_{D_\theta}$, our analysis shows that the model will effectively only minimize $H(p_{d}(x))$, without any dependence on $p_{data}$ (see Appendix \ref{apndx:sec_theory} for more details). Indeed, the empirical results in \cite{introvae18} were only obtained with the $ELBO(x)$ term in $\mathcal{L}_{D_\theta}$.
Furthermore, our analysis, as detailed in the supplementary material, does not build on representing our model as a particular instance of the EBGAN, but explicitly builds on the variational properties of the ELBO. In this sense, our results more closely tie between variational inference principles and adversarial generative models. 

It is important to note that at equilibrium, the encoder converges to the true posterior $q^* = p_{d^*}(z|x)$. Thus, the exponential penalty in $\mathcal{L}_E$ does not harm the inference properties of the encoder. This important conclusion does not appear in the analysis of \cite{introvae18}.

We finally remark on the parameter $\gamma$. While the analysis makes a strict assumption on $\gamma$, as evident in Assumption \ref{ass:ass_1} and Proposition \ref{prp:prp_1}, in all our experiments we set $\gamma=1$. Thus, it seems that in practice the requirements for obtaining decent results are much less stringent.

\section{Implementation}
\label{sec:impl}
In this section, we outline several implementation modifications of the S-IntroVAE algorithm that proved helpful in practice. Pseudo-code of our algorithm is depicted in Algorithm \ref{alg:training_short}, and a more detailed version can be found in Appendix \ref{apndx:algo}. In all our experiments, we set $\alpha=2$ (other values, such as $\alpha=1$, work similarly) and $\gamma=1$ (cf. \eqref{eq:sintrovae_loss}). Moreover, for the ELBO terms, we use the $\beta$-VAE~\cite{higgins2017beta} formulation and rewrite it as:  $ELBO(x) = \beta_{rec}\mathbb{E}_{q_{\phi}(z\mid x)}\left[\log p_{\theta}(x \mid z)\right] -\beta_{kl}KL\left[q_{\phi}(z\mid x) \mid \mid p(z) \right]$.
The hyperparameters $\beta_{rec}$ and $\beta_{kl}$ control the balance between inference and sampling quality respectively.\footnote{In principle, only the ratio between $\beta_{rec}$ and $\beta_{kl}$ affects the loss function. However, we found it easier to work with two parameters, as the balance between ELBO and expELBO is affected by both parameters. } When $\beta_{rec} > \beta_{kl}$, the optimization is focused on good reconstructions, which may lead to less variability in the generated samples, as latent posteriors are allowed to be very different from the prior, and when $\beta_{rec} < \beta_{kl}$, there will be more varied samples, but reconstruction quality will degrade. Note that when $\beta_{rec} << \beta_{kl}$, the VAE is prone to \textit{posterior collapse}, and we further discuss this issue in Appendix \ref{apndx:collapse}. For most cases, we found that values between 0.05 and 1 for $\beta_{rec}$ and $\beta_{kl}$ work well, and the best configuration depends on the architecture and data set, as we detail in Section \ref{sec:experiments}.

Each ELBO  term in \eqref{eq:sintrovae_loss} can be considered as an instance of $\beta$-VAE and can have different $\beta_{rec}$ and $\beta_{kl}$ parameters. However, we set them all to be the same, except for the ELBO inside the exponent. For this term, $\beta_{kl}$ controls the repulsion force of the posterior for generated samples from the prior. 
We found that good results are obtained when $\beta_{kl}$ is set to be on the order of magnitude of $z_{dim}$ (e.g., for $z_{dim} = 128$, $\beta_{kl}$ should be set to values around 128).
In Algorithm \ref{alg:training_short}, this specific parameter is denoted as $\beta_{neg}$.
Moreover, notice that the decoder tries to minimize the reconstruction error for generated data, which may slow down convergence, as at the beginning of the optimization the generated samples are of low quality. Thus, we introduce hyperparameter $\gamma_{r}$ (not to be confused with $\gamma$ from \eqref{eq:sintrovae_loss}) that multiplies only the reconstruction term of the generated data in the ELBO term of the decoder in \eqref{eq:sintrovae_loss}. 
In all our experiments we used a constant $\gamma_r=1e-8$ independently of the data type. We remark that setting $\gamma_r$ to zero had a significant detrimental effect on performance. Finally, we use a scaling constant $s$ to balance between the ELBO and the expELBO terms in the loss, and we set $s$ to be the inverse of the input dimensions. This scaling constant prevents the expELBO from vanishing for high-dimensional input. Our code is publicly available at \url{https://github.com/taldatech/soft-intro-vae-pytorch}.

\begin{algorithm}[t]
\caption{Training Soft-IntroVAE (pseudo-code)}
\label{alg:training_short}
\begin{algorithmic}[1]
\Require $\beta_{rec}, \beta_{kl}, \beta_{neg}, \gamma_r$
\State \(\phi_E, \theta_D \gets\) Initialize~ network~ parameters
\State $s \gets 1 / \text{input dim}$ \Comment{Scaling constant}
\While{not converged}
\State \(X \gets\) Random mini-batch from dataset
\State \(Z \gets\) \(E(X)\) \Comment{Encode}
\State \(Z_f \gets\) Samples from prior \(N(0,I)\)
\Procedure {UpdateEncoder}{$\phi_E$}
    \State \(X_r \gets\) \(D(Z)\), \(X_f \gets\) \(D(Z_f)\) \Comment{Decode}
    \State \(Z_{ff} \gets\) \(E(X_f)\)
    \State\(X_{ff} \gets\) \(D(Z_{ff})\)
    \State $\text{ELBO} \gets s \cdot ELBO(\beta_{rec}, \beta_{kl}, X, X_r, Z) $
    \State $\text{ELBO}_f \gets ELBO(\beta_{rec}, \beta_{neg}, X_f, X_{ff}, Z_{ff}) $
    \State $\text{expELBO}_f \gets 0.5\exp(2s \cdot \text{ELBO}_f)$
    \State $L_E \gets \text{ELBO} - \text{expELBO}_f $ \Comment{Eq. \ref{eq:sintrovae_loss}}
    \State \( \phi_E \gets \phi_E + \eta\nabla_{\phi_E} ( L_E ) \) \Comment{Adam update}
\EndProcedure
\Procedure {UpdateDecoder}{$\theta_D$}
    \State \(X_r \gets\) \(D(Z)\), \(X_f \gets\) \(D(Z_f)\) \Comment{Decode}
    \State \(Z_{ff} \gets\) \(E(X_f)\)
    \State \(X_{ff} \gets\) \(sg(D(Z_{ff}))\) \Comment{\textit{sg: stop-gradient}}
    \State $\text{ELBO} \gets \beta_{rec}L_{rec}(X, X_r) $
    \State $\text{ELBO}_f \! \gets \!\! ELBO(\gamma_r \cdot \beta_{rec}, \beta_{kl}, X_f, X_{ff}, Z_{ff}) $
    \State $L_D \gets s \cdot (\text{ELBO}  + \text{ELBO}_f) $ \Comment{Eq. \ref{eq:sintrovae_loss}}
    \State \(\theta_D \gets \theta_D + \eta\nabla_{\theta_D} (L_D ) \)\Comment{Adam update}
\EndProcedure
\EndWhile
\end{algorithmic}
\end{algorithm}

\section{Experiments}\label{sec:experiments}

In this section, we detail our experiments with the following goals in mind: (1) Understanding the distributions that S-IntroVAE learns to generate; (2) Evaluating the robustness and stability of S-IntroVAE compared to IntroVAE; (3) Benchmarking S-IntroVAE on high-quality image synthesis; (4) Demonstrating a practical application of S-IntroVAE to unsupervised image translation; and (5) Evaluating our model's capability of accurate likelihood prediction for the task of OOD detection in images.

To answer (1) and (2), we investigate learning of 2-dimensional distributions, which are both easy to interpret, and enable a quantitative evaluation of quality and robustness. For (3), (4), and (5), we experiment with standard image data sets.

\subsection{2D Toy Datasets}
\label{2d_exp}

We evaluate our method on four 2D datasets: 8 Gaussians, Spiral, Checkerboard and Rings \cite{bnaf19, grathwohl2018ffjord}, and compare with a standard VAE and IntroVAE. 

We calculate 3 metrics to measure how well the methods learn the true data distribution: KL-divergence and Jensen–Shannon-divergence (JSD), and a custom metric, \textit{grid-normalized ELBO} (gnELBO).
The KL and JSD are calculated using empirical histograms of samples generated from the trained models and samples generated from the real data distribution. Thus, these metrics evaluate the generation capabilities of the learned model. The grid-normalized ELBO, on the other hand, treats the ELBO as an energy term. We normalize the ELBO of the learned model over a grid of points to produce a normalized energy term  $\hat{ELBO}$, and we measure $\mathbb{E}_{x\sim p_{data}}\left[ -\hat{ELBO}(x)\right]$ by sampling from the data distribution. This metric is affected by the encoder and thus effectively measures the inference capabilities of the model -- a good model should assign a high ELBO for likely samples and a low ELBO for unlikely ones. For all metrics, \textit{lower is better}. 

In Figure \ref{fig:2d_plots} we plot random samples from the models and a density estimation, obtained by approximating $p(x)$ with $\exp(ELBO)$. In order to tune the algorithms for each dataset, we ran an extensive hyperparameter grid search of 81 runs for the standard VAE, 210 runs for S-IntroVAE, and 1260 runs for IntroVAE, due to the additional $m$ parameter. The architecture for all methods is a simple 3-layer fully-connected network with 256 hidden units and ReLU activations, and the latent space dimension is 2. The complete set of hyperparameters and their range is provided in Appendix \ref{apndx:hyper_2d}. 

Results for the different evaluation metrics are shown in Table \ref{tab:toy_exp}. 
Evidently, Soft-IntroVAE outperforms IntroVAE both quantitatively and qualitatively, and both are superior to the standard VAE. Note that the standard VAE assigns low energy (high likelihood) to points outside the data support, as pointed out in Section \ref{sec:background}. The adversarial loss in the introspective models, on the other hand, prohibits the model from generating such samples.

\begin{figure}
     \centering
     \begin{subfigure}[b]{0.4\textwidth}
         \centering
         \includegraphics[width=\textwidth]{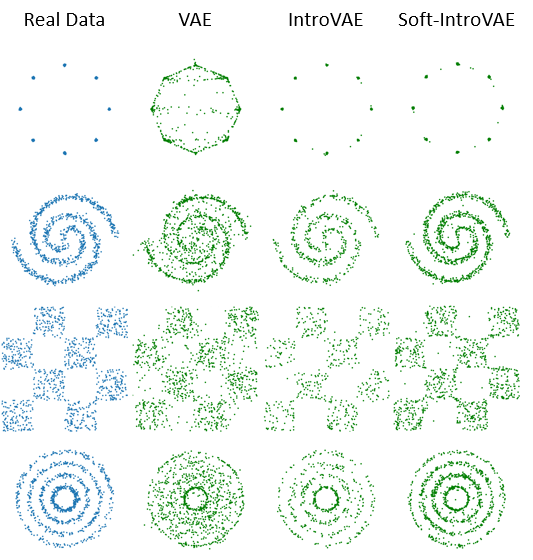} 
         \caption{Samples from the trained models.
         }
         \label{2d_samples}
     \end{subfigure}
     \hfill
     \begin{subfigure}[b]{0.4\textwidth}
         \centering
         \includegraphics[width=\textwidth]{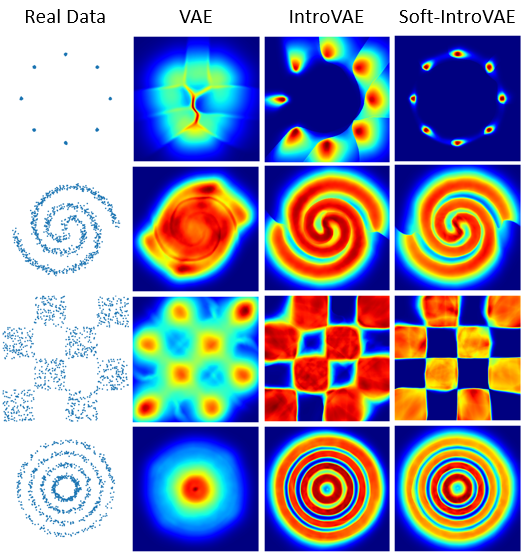}
         \caption{Density estimation with the trained models.}
         \label{2d_desnsity}
     \end{subfigure}
        \caption{Unsupervised learning of 2D datasets.
        }
        \label{fig:2d_plots}
\end{figure}

\begin{table}[ht]
\begin{center}
\begin{scriptsize}
\begin{tabular}{lccccc}
    & & VAE & IntroVAE & Soft-IntroVAE \\
    \hline
    \multirow{3}{*}{8 Gaussians}
    &gnELBO & 7.42$\pm$ 0.07 & 1.29$\pm$0.76 & \textbf{1.25$\pm$0.35}  &\\
    &KL & 6.72$\pm$0.46 & 2.53$\pm$1.07 & \textbf{1.25$\pm$0.11} & \\
    &JSD & 16.04$\pm$0.3 & 1.67$\pm$0.46 & \textbf{0.96$\pm$0.15} &\\
    \hline
    \multirow{3}{*}{Spiral}
    &gnELBO & 6.19$\pm$0.06 & 5.87$\pm$0.03 & \textbf{5.21$\pm$0.04} &\\
    &KL & 9.8$\pm$0.48 & 8.38$\pm$0.45 & \textbf{8.13$\pm$0.3} & \\
    &JSD & 4.89$\pm$0.05 & 3.58$\pm$0.04 & \textbf{3.37$\pm$0.04} & \\
    \hline
    \multirow{3}{*}{Checkerboard}
    &gnELBO & 8.53$\pm$0.1  & 8.54$\pm$0.11 & \textbf{4.47$\pm$0.29} &\\
    &KL & 20.91$\pm$0.45 & \textbf{19.03$\pm$0.34} & 20.27$\pm$0.21 &\\
    &JSD & 9.78$\pm$0.04 & 9.07$\pm$0.1 & \textbf{9.06$\pm$0.15} &\\
    \hline
    \multirow{3}{*}{Rings}
    &gnELBO & 6.4$\pm$0.04 & 7.25$\pm$0.18 & \textbf{6.3$\pm$0.08} &\\
    &KL & 13.16$\pm$0.55 & 10.21$\pm$0.49 & \textbf{9.18$\pm$0.33} &\\
    &JSD & 7.26$\pm$0.07 & 4.24$\pm$0.11 & \textbf{4.13$\pm$0.09} &\\
    \hline
\end{tabular}
\end{scriptsize}
\end{center}
\caption{Results on 2D datasets. 
Grid-normalized ELBO is in $1e^{-7}$ units. Results are averaged over 5 seeds.}
\label{tab:toy_exp}
\end{table}

\subsection{Training Stability of Soft-IntroVAE}
In practice, we found that training the original IntroVAE model was very difficult, and prone to instability. In fact, we were not able to reproduce the results reported in \cite{introvae18} on image datasets, even when using the authors' published code~\cite{introvae18code} or other implementations, and countless parameter investigations. We suspect that the algorithm is very sensitive to the choice of the $m$ parameter: at some point during training when the KL of generated samples is larger than the threshold $m$, there is no more adversarial signal for the encoder (as there are no gradients from the KL-divergence of generated samples), while the decoder still tries to `fool' the encoder. Finding the right $m$ where both the encoder and decoder can maintain the adversarial signal and reach equilibrium is the key for IntroVAE success.

To demonstrate that the $m$ hyperparameter in IntroVAE plays an important role in the stability and convergence of IntroVAE, we pick the best combination of hyperparameters from the extensive search we performed for the experiments in Section \ref{2d_exp} on the 8-Gaussians 2D dataset, and plot the KL and JSD (lower is better) for varying values of $m$. 
For comparison, we plot the results of S-IntroVAE with the same hyperparameters (without $m$) in Figure \ref{fig:m_compare}. Evidently, IntroVAE is very sensitive to this hyperparameter, while our method does not require it. We remark that for different datasets, significantly different values of $m$ were required to obtain reasonable results. We found the soft expELBO term less sensitive, and we obtained good results on a wide range of values as described in Section \ref{sec:impl}.

\begin{figure}[t]
    \centering
    \includegraphics[width=0.4\textwidth]{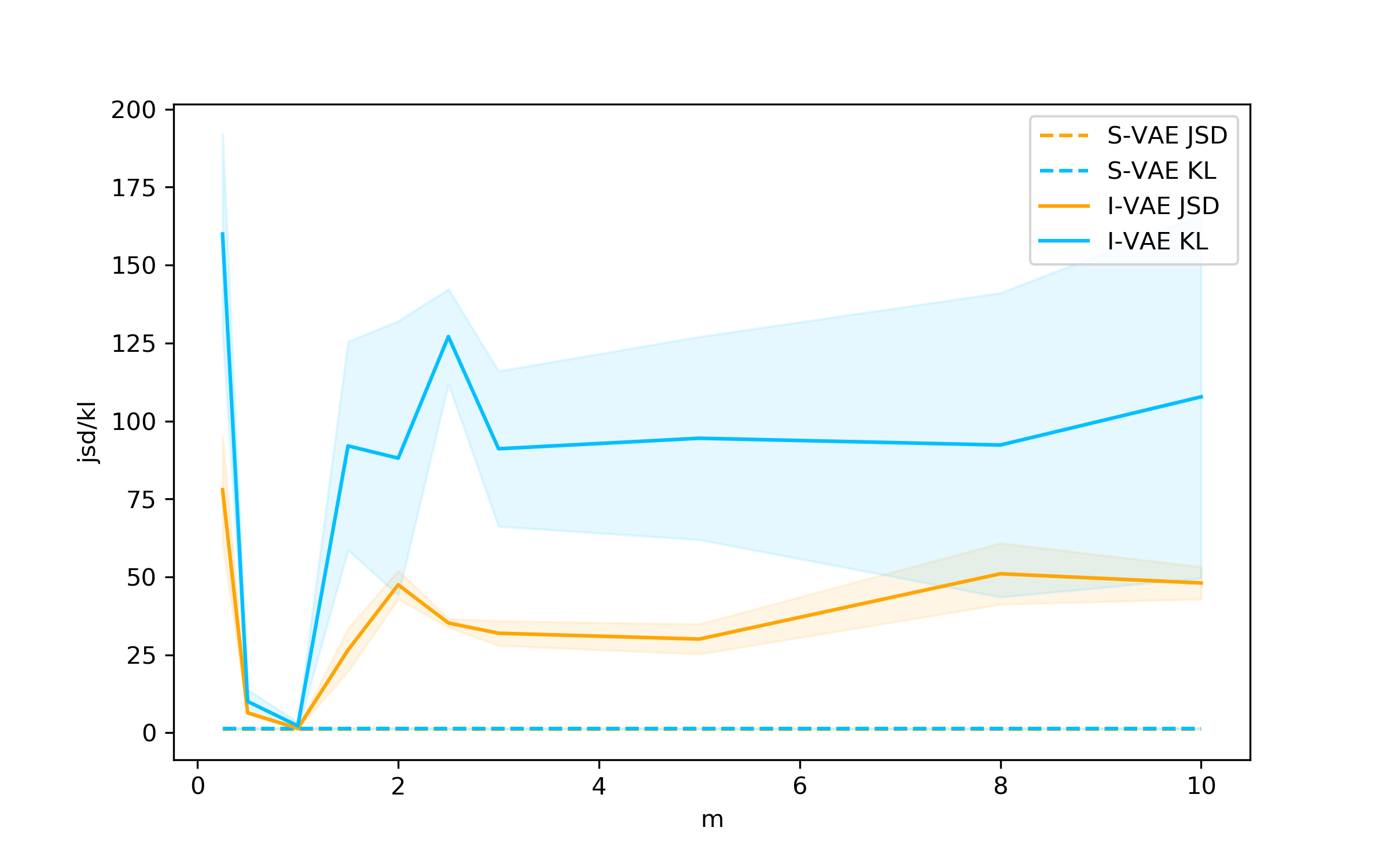} 
    \vspace{-0.5em}
    \caption{Stability investigation. KL divergence and JSD metrics for IntroVAE (I-VAE in the plot) with respect to $m$ (see text for details). 
    S-IntroVAE (S-VAE in the plot) does not require the $m$ parameter, and we plot its performance with all other hyper-parameters the same as IntroVAE. Results are averaged over 5 seeds. Note that the constant lines of the KL and JSD of S-VAE overlap.}
    \label{fig:m_compare}
    \vspace{-1.5em}
\end{figure}

\subsection{Image Generation}
\label{img_exp}
In this section, we evaluate Soft-IntroVAE on image synthesis in terms of both inference (i.e., reconstruction) and sampling (generation). To measure the quality and variability of the images, we report the Fréchet inception distance (FID) based on 50,000 generated images from the model, and use the training samples as the reference images \cite{karras2019style, pidhorskyi2020adversarial}. Detailed hyperparameter settings and data set details are provided in the supplementary material.

\textbf{Architectures and Hyperparameters:} 
We experiment with two convolution-based architectures: (1) IntroVAE's~\cite{introvae18} encoder-decoder architecture with residual-based convolutional layers (see Figure \ref{fig:vae_arch}) and (2) ALAE's \cite{pidhorskyi2020adversarial} style-based autoencoder architecture, which adopted StyleGAN's \cite{karras2020analyzing} style generator to a style-based encoder (see Figure \ref{fig:style_vae_arch}). For the style-based architecture, we also use progressive growing as in \cite{karras2017progressive, karras2019style, pidhorskyi2020adversarial}, where we start from low-resolution $4\times 4$ images and progressively increase the resolution by smoothly blending in new blocks in the encoder and decoder. The reconstruction loss is chosen to be the pixel-wise mean square error (MSE). 
For the detailed description of the architectures and hyperparameters, see Appendix \ref{apndx:hyper_image}.

\begin{figure*}
     \centering
    
     \begin{subfigure}[b]{0.25\textwidth}
             \centering
             \includegraphics[width=2cm,height=5.5cm,keepaspectratio]{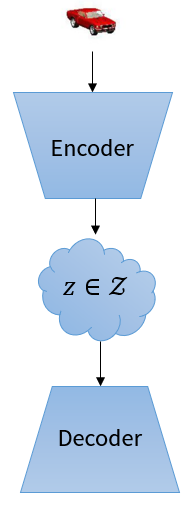}
             \caption{Standard Architecture}
             \label{fig:vae_arch}
    \end{subfigure}
    \hspace{-1.0em}
     \begin{subfigure}[b]{0.25\textwidth}
             \centering
             \includegraphics[width=3cm,height=10cm,keepaspectratio]{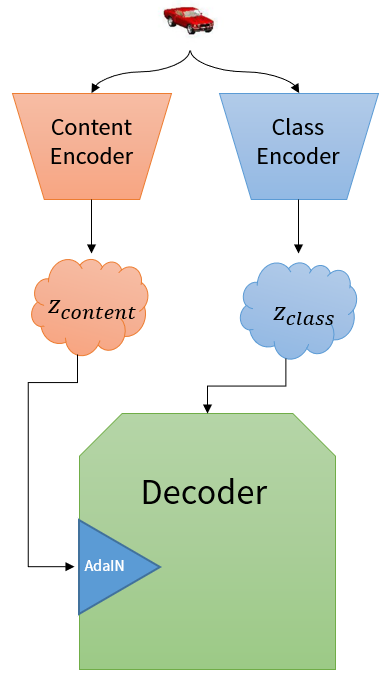}
             \caption{Disentanglement Architecture}
             \label{fig:disentangle_vae_arch}
    \end{subfigure}
    \begin{subfigure}[b]{0.5\textwidth}
         \centering
         \includegraphics[width=8cm,height=12cm,keepaspectratio]{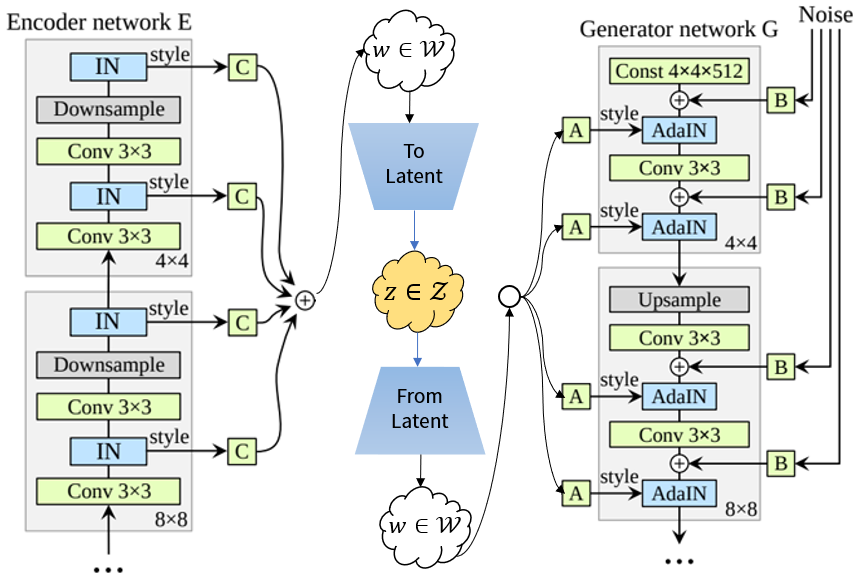}
         \caption{Style-based Architecture inspired by \cite{pidhorskyi2020adversarial}}
         \label{fig:style_vae_arch}
    \end{subfigure}

        \caption{Different architectures used in our experiments. (a) Standard MLP/CNN encoder-decoder architectures; (b) Based on the architecture proposed by \cite{gabbay2019demystifying}: separate encoders with different latent spaces are learned to disentangle class from content. The decoder uses adaptive Instance Normalization to account for the class latent variable when decoding the content; (c) Style-based architecture proposed by~\cite{pidhorskyi2020adversarial}: the encoder extracts styles which are then mapped to the latent space. The latent variable is then mapped back to styles, which are finally decoded with a style-based decoder.}
        \label{fig:vae_architectures}
\end{figure*}

\textbf{CIFAR-10 dataset:} 
We evaluate both the class-conditional and unconditional settings using the architecture of the original IntroVAE~\cite{introvae18}. For the unconditional setting we report FID of 4.6 and qualitative samples and reconstructions of unseen data are displayed in Figure \ref{fig:cifar_mse}. Evidently, our model is able to generate and reconstruct high-quality samples from the various classes of CIFAR-10. For the class-conditional setting, we report FID of 4.07 and qualitative samples can be found in Appendix \ref{apndx_im_gen} along with more samples from the unconditional model.

\textbf{CelebA-HQ and FFHQ datasets:} 
For both datasets we downscale the images to 256x256 resolution. We use the style-based architecture with a latent and style dimension of 512. 
Our model is capable of generating high-quality samples and faithfully reconstructing unseen samples, as demonstrated in Figure \ref{fig:image_samples_ffhq} and Figure \ref{fig:image_samples_celeb} in the supplementary. We provide more samples in Appendix \ref{apndx_im_gen}. 
Table \ref{tab:celeba_ffhq_fid} quantitatively compares our method's performance to various GANs and explicit density methods. It can be seen that S-IntroVAE outperforms all previous autoencoding-based models, further narrowing the gap between VAEs and GANs.

\begin{center}
    \begin{table}[]
    \begin{tabular}{|l|l|l|l|l|l|}
    \hline
                                                &   CelebA-HQ  & FFHQ \\ \hline
    PGGAN  \cite{karras2017progressive})      & \textbf{8.03} & -   \\ \hline
    BigGAN \cite{brock2018large} & - & 11.48 \\ \hline
    U-Net GAN \cite{schonfeld2020u} & - & \textbf{7.48} \\ \hline \hline
    GLOW    \cite{kingma2018glow}           & 68.93 & -   \\ \hline
    Pioneer   \cite{Heljakka_2019}         & 39.17 & -  \\ \hline
    Balanced Pioneer \cite{Heljakka_2020}   & 25.25 & -   \\ \hline
    StyleALAE       \cite{pidhorskyi2020adversarial}   & 19.21 & - \\ \hline
    SoftIntroVAE (Ours) & \textbf{18.63} &  \textbf{17.55}  \\ \hline
    \end{tabular}
    \caption{Comparison of FID scores (lower is better) for CelebA-HQ and FFHQ datasets at a resolution of 256x256. Note the separation between GANs (top) and explicit density methods (bottom).}
    \label{tab:celeba_ffhq_fid}
\end{table}
\end{center}

\vspace{-3em}
\textbf{Interpolation in the latent space:} 
Figure \ref{fig:celeb_interpolations} shows smooth interpolation between the latent vectors of two images from S-IntroVAE trained on the CelebA-HQ dataset. We provide additional interpolations in Appendix \ref{apndx_interpolation}.

\begin{center}
\begin{figure}
    \includegraphics[width=0.45\textwidth]{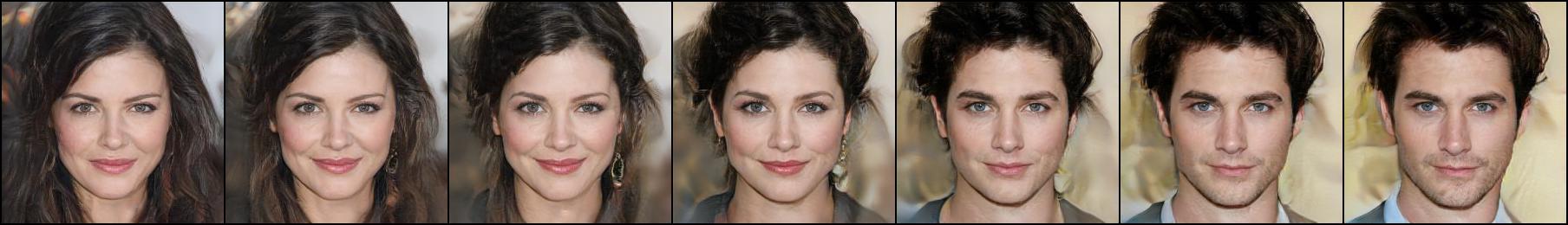}
    \caption{Interpolation in the latent space between two samples from a model trained on CelebA-HQ.}
    \label{fig:celeb_interpolations}
\end{figure}
\end{center}

\begin{figure}
     \centering
     \begin{subfigure}[b]{0.2\textwidth}
             \centering
             \includegraphics[width=\textwidth]{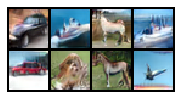}
             \caption{Generated samples (FID: 4.6).}
             \label{fig:cifar10_gen}
        \end{subfigure}
     \hfill
     \begin{subfigure}[b]{0.25\textwidth}
         \centering
         \includegraphics[width=0.8\textwidth]{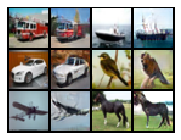}
         \caption{Reconstructions on test data: Left: real, right: reconstruction.}
         \label{fig:cifar10_rec}
     \end{subfigure}
        \caption{Results for the CIFAR-10 dataset in an unconditional setting.}
        \label{fig:cifar_mse}
\end{figure}

\vspace{-3em}
\subsection{Image Translation}
\label{translation_exp}
To demonstrate the advantage of our model's inference capability, we evaluate S-IntroVAE on image translation -- learning disentangled representations for \textit{class} and \textit{content}, and transferring content between classes (e.g., given two images of cars from different visual classes, rotate the first car to be in the angle of the second car, without altering the car's other visual properties). In a typical \textit{class-supervised} image translation setting, class labels are used to encourage class-content separation. The current SOTA in this setting is LORD~\cite{gabbay2019demystifying}. Our focus here, however, is on the \textit{unsupervised} image translation setting, where no labels are used at any point. A recent study \cite{JMLR:v21:19-976} showed that unsupervised learning of disentangled representations can only work by exploiting some form of inductive bias. Here, we claim that the encoder architecture in LORD effectively adds such a strong inductive bias -- we show that when coupled with our S-IntroVAE training method, we can achieve unsupervised image translation results that come close to the supervised SOTA.

We adopt the two-encoder architecture proposed in LORD, where one encoder is for the class and the other for the content. LORD's decoder uses adaptive Instance Normalization~\cite{huang2017arbitrary} to align the mean and variance of the content features with those of the class features,
as depicted in Figure \ref{fig:disentangle_vae_arch}. 
In our proposed architecture, the ELBO is the sum of the reconstruction error and two KL terms: $ KL\left[q_{\phi_{content}}(z\mid x) \mid \mid p(z) \right] + KL\left[q_{\phi_{class}}(z\mid x) \mid \mid p(z) \right].$
The separation to two encoders imposes strong inductive bias, as the model explicitly learns different representations for the class and content. Following the implementation in \cite{gabbay2019demystifying}, we replace the pixel-wise MSE reconstruction loss with the VGG perceptual loss~\cite{hoshen2019non}. 

We quantitatively evaluate our method on the Cars3D dataset~\cite{reed2015deep}, where the \textit{class} corresponds to the car model and the \textit{content} is the azimuth and elevation. 
We follow \cite{gabbay2019demystifying} and measure content transfer in terms of perceptual similarity by Learned Perceptual Image Patch Similarity (LPIPS)~\cite{zhang2018unreasonable}. Unlike previous methods that use some kind of supervision signal (e.g., class label), our method does not require such signals. As demonstrated qualitatively in Figure \ref{fig:cars_3d} and quantitatively in Table \ref{tab:cars3d_lpips}, our method outperforms most of the supervised methods, and narrows the gap to the SOTA supervised approach. We present additional qualitative results on the KTH dataset~\cite{schuldt2004recognizing} in Figure \ref{fig:kth}. Further details can be found in Appendix \ref{apndx:hyper_translation}.

Interestingly, \cite{gabbay2019demystifying} showed that even with the architecture described above, a standard VAE struggles with learning disentangled representations due to vanishing of the KL term (posterior collapse). Our study shows that when training using our introspective manner, the model is able to overcome this issue, showing an additional benefit of the introspective approach.

\begin{figure}
     \centering
     \begin{subfigure}[b]{0.22\textwidth}
             \centering
             \includegraphics[width=\textwidth]{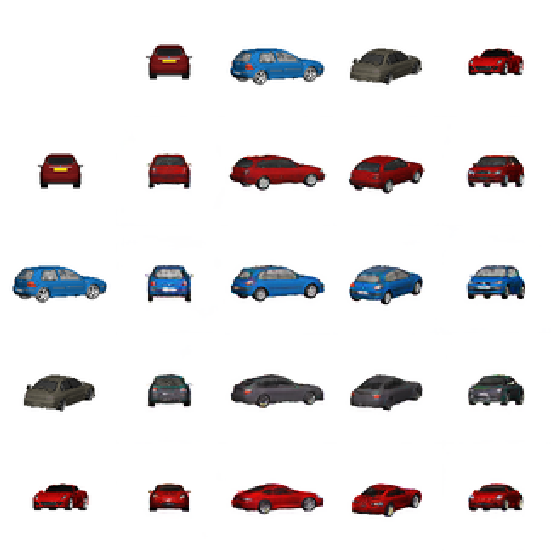}
             \caption{Cars3D}
             \label{fig:cars_3d}
        \end{subfigure}
     \begin{subfigure}[b]{0.22\textwidth}
         \centering
         \includegraphics[width=\textwidth]{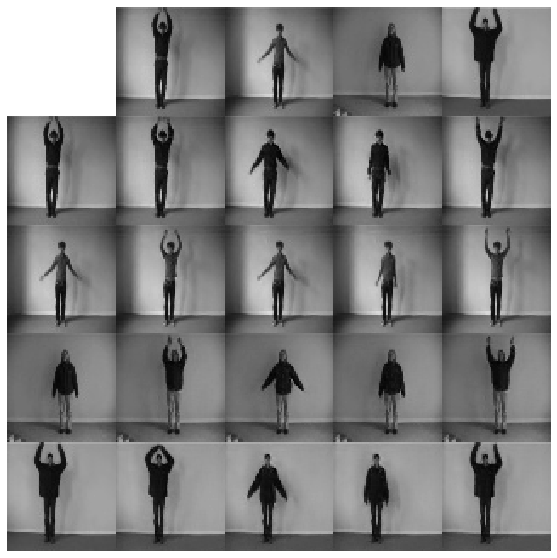}
         \caption{KTH}
         \label{fig:kth}
     \end{subfigure}
        \caption{Qualitative results for content transfer on test data from the Cars3D and KTH datasets. The $4\times 4$ bottom-right matrix of images is generated according to a class from images in the left column and content of images in the top row. It is recommended to zoom-in.}
        \label{fig:content_transfer}
\end{figure}

\begin{center}
\begin{table}[]
\begin{tabular}{|l|l|l|}
\hline
Szabó et al. \cite{szabo2017challenges} & Supervised & 0.137 \\ \hline
ML-VAE  \cite{bouchacourt2017multi}   & Supervised   & 0.132 \\ \hline
Cycle-VAE \cite{harsh2018disentangling}  & Supervised   & 0.141 \\ \hline
DrNet    \cite{denton2017unsupervised} & Supervised    & 0.095 \\ \hline 
LORD \cite{gabbay2019demystifying}     & Supervised   & \textbf{0.078} \\ \hline \hline
SoftIntroVAE (Ours)      & Unsupervised    &  \textbf{0.084} \\ \hline
\end{tabular}
\caption{Content transfer reconstruction error (LPIPS, lower is better) on the Cars3D dataset.}
\label{tab:cars3d_lpips}
\end{table}
\end{center}

\subsection{Out-of-Distribution (OOD) Detection}
\label{sec:ood}
Finally, we present another application of S-IntroVAE to OOD detection. In OOD detection, it is desired to identify whether a sample $x$ belongs to the data distribution or not. A natural approach is therefore to approximately learn $p_{data}(x)$, using some deep generative model, and then to threshold the likelihood of the sample. Interestingly, Nalisnick et al.~\cite{nalisnick2018deep} recently challenged this approach, with the claim that log-likelihood based models are not effective at OOD detection on image datasets, based on evidence for VAEs and flow-based models. Conversely, we show that an S-IntroVAE model, trained in the same setting of \cite{nalisnick2018deep}, obtains excellent OOD detection results. We estimate the log-likelihood of a data sample $x$ using importance-weighted sampling from the trained models: $\log p(x) \approx \log \sum_{i=1}^M p(x|z_i)\frac{p(z_i)}{q_\phi(z_i |x)} $, where $z_i \sim q_{\phi}(z_i |x)$ and $M$ is the number of Monte-carlo samples (we used $M=5$). In Figure \ref{fig:ood} we plot histograms of log-likelihoods for models trained on CIFAR10, evaluated on train and test data from CIFAR10, and also on data from the SVHN dataset. The SVHN data contains images of street house numbers, and is significantly different from the images classes in CIFAR10, and surprisingly, \cite{nalisnick2018deep} found that a standard VAE assigns higher likelihood to SVHN samples than samples from the original CIFAR10 data. Our results in Figure \ref{fig:ood_vae} indeed confirm this observation. 
However, Figure \ref{fig:ood_sintro_vae} shows that S-IntroVAE correctly assigns significantly higher likelihoods to data from CIFAR10. We provide additional results and analysis in Appendix \ref{sec:apndx_ood}, showing that S-IntroVAE obtains near perfect OOD detection results in all the settings investigated in \cite{nalisnick2018deep}.
We conclude that further research is required to claim that log-likelihood models are generally not effective at OOD detection. In particular, it appears that the architecture and training method of the model can make a significant difference, and our positive OOD results are an encouraging motivation for further research on OOD detection using likelihood based models.

\begin{figure}
     \centering
        \begin{subfigure}[b]{0.4\textwidth}
             \centering
             \includegraphics[width=\textwidth]{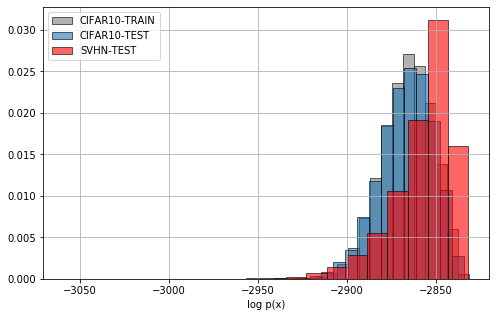}
             \caption{VAE}
             \label{fig:ood_vae}
        \end{subfigure}
     \begin{subfigure}[b]{0.4\textwidth}
         \centering
         \includegraphics[width=\textwidth]{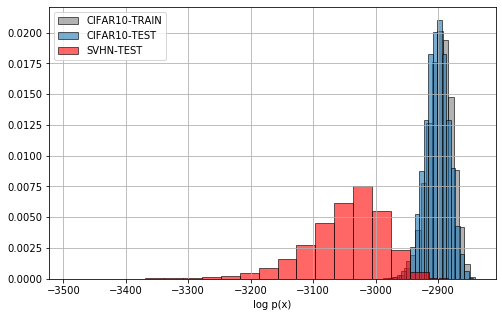}
         \caption{Soft-IntroVAE}
         \label{fig:ood_sintro_vae}
     \end{subfigure}
        \caption{Out-of-distribution detection based on log-likelihood estimation of VAE and Soft-IntroVAE, when the models are trained on CIFAR10 and tested on SVHN.}
        \label{fig:ood}
\end{figure}

\section{Conclusion}
Introspective VAEs narrow the gap between VAEs and GANs in terms of sampling quality, while still enjoying the favorable traits of variational inference models, such as amortized inference. In this work, we proposed the Soft-IntroVAE -- a modification of IntroVAE that is stable to train and simpler to analyze. Our investigation resulted in new insights on introspective training, and our experiments demonstrate competitive image generation results.

We see great potential in introspective models, as they open the door for using high quality generative models in applications that also require fast and high-quality inference. We look forward to future work that will investigate the application of Soft-IntroVAE to domains such as reinforcement learning and novelty detection.
\section{Acknowledgements}
This work is partly funded by the Israel Science Foundation (ISF-759/19) and the Open Philanthropy Project Fund, an advised fund of Silicon Valley Community Foundation.

{\small
\bibliographystyle{ieee_fullname}
\bibliography{egbib}

\begin{thebibliography}{10}\itemsep=-1pt

\bibitem{introvae18code}
\url{https://github.com/hhb072/IntroVAE}.

\bibitem{bouchacourt2017multi}
Diane Bouchacourt, Ryota Tomioka, and Sebastian Nowozin.
\newblock Multi-level variational autoencoder: Learning disentangled
  representations from grouped observations.
\newblock {\em arXiv preprint arXiv:1705.08841}, 2017.

\bibitem{Bowman_2016}
Samuel~R. Bowman, Luke Vilnis, Oriol Vinyals, Andrew Dai, Rafal Jozefowicz, and
  Samy Bengio.
\newblock Generating sentences from a continuous space.
\newblock {\em Proceedings of The 20th SIGNLL Conference on Computational
  Natural Language Learning}, 2016.

\bibitem{brock2018large}
Andrew Brock, Jeff Donahue, and Karen Simonyan.
\newblock Large scale gan training for high fidelity natural image synthesis.
\newblock {\em arXiv preprint arXiv:1809.11096}, 2018.

\bibitem{brock2016neural}
Andrew Brock, Theodore Lim, James~M Ritchie, and Nick Weston.
\newblock Neural photo editing with introspective adversarial networks.
\newblock {\em arXiv preprint arXiv:1609.07093}, 2016.

\bibitem{Capinski2004}
Marek Capi{\'{n}}ski and Peter~Ekkehard Kopp.
\newblock {\em The Radon---Nikodym theorem}, pages 187--240.
\newblock Springer London, London, 2004.

\bibitem{choi2018waic}
Hyunsun Choi, Eric Jang, and Alexander~A Alemi.
\newblock Waic, but why? generative ensembles for robust anomaly detection.
\newblock {\em arXiv preprint arXiv:1810.01392}, 2018.

\bibitem{daniel2019deep}
Tal Daniel, Thanard Kurutach, and Aviv Tamar.
\newblock Deep variational semi-supervised novelty detection.
\newblock {\em arXiv preprint arXiv:1911.04971}, 2019.

\bibitem{bnaf19}
Nicola De~Cao, Ivan Titov, and Wilker Aziz.
\newblock Block neural autoregressive flow.
\newblock {\em 35th Conference on Uncertainty in Artificial Intelligence
  (UAI19)}, 2019.

\bibitem{denton2017unsupervised}
Emily~L Denton et~al.
\newblock Unsupervised learning of disentangled representations from video.
\newblock In {\em Advances in neural information processing systems}, pages
  4414--4423, 2017.

\bibitem{donahue2016adversarial}
Jeff Donahue, Philipp Kr{\"a}henb{\"u}hl, and Trevor Darrell.
\newblock Adversarial feature learning.
\newblock {\em arXiv preprint arXiv:1605.09782}, 2016.

\bibitem{dumoulin2016adversarially}
Vincent Dumoulin, Ishmael Belghazi, Ben Poole, Olivier Mastropietro, Alex Lamb,
  Martin Arjovsky, and Aaron Courville.
\newblock Adversarially learned inference.
\newblock {\em arXiv preprint arXiv:1606.00704}, 2016.

\bibitem{gabbay2019demystifying}
Aviv Gabbay and Yedid Hoshen.
\newblock Demystifying inter-class disentanglement.
\newblock In {\em International Conference on Learning Representations}, 2019.

\bibitem{ghosh2019variational}
Partha Ghosh, Mehdi~SM Sajjadi, Antonio Vergari, Michael Black, and Bernhard
  Sch{\"o}lkopf.
\newblock From variational to deterministic autoencoders.
\newblock {\em arXiv preprint arXiv:1903.12436}, 2019.

\bibitem{goodfellow2014generative}
Ian Goodfellow, Jean Pouget-Abadie, Mehdi Mirza, Bing Xu, David Warde-Farley,
  Sherjil Ozair, Aaron Courville, and Yoshua Bengio.
\newblock Generative adversarial nets.
\newblock In {\em Advances in neural information processing systems}, pages
  2672--2680, 2014.

\bibitem{goyal2017zforcing}
Anirudh Goyal, Alessandro Sordoni, Marc-Alexandre Côté, Nan~Rosemary Ke, and
  Yoshua Bengio.
\newblock Z-forcing: Training stochastic recurrent networks, 2017.

\bibitem{grathwohl2018ffjord}
Will Grathwohl, Ricky~TQ Chen, Jesse Bettencourt, Ilya Sutskever, and David
  Duvenaud.
\newblock Ffjord: Free-form continuous dynamics for scalable reversible
  generative models.
\newblock {\em arXiv preprint arXiv:1810.01367}, 2018.

\bibitem{ha2018world}
David Ha and J{\"u}rgen Schmidhuber.
\newblock World models.
\newblock {\em arXiv preprint arXiv:1803.10122}, 2018.

\bibitem{Han_2020}
Tian Han, Erik Nijkamp, Linqi Zhou, Bo Pang, Song-Chun Zhu, and Ying~Nian Wu.
\newblock Joint training of variational auto-encoder and latent energy-based
  model.
\newblock {\em 2020 IEEE/CVF Conference on Computer Vision and Pattern
  Recognition (CVPR)}, Jun 2020.

\bibitem{harsh2018disentangling}
Ananya Harsh~Jha, Saket Anand, Maneesh Singh, and VSR Veeravasarapu.
\newblock Disentangling factors of variation with cycle-consistent variational
  auto-encoders.
\newblock In {\em Proceedings of the European Conference on Computer Vision
  (ECCV)}, pages 805--820, 2018.

\bibitem{havrylov2020preventing}
Serhii Havrylov and Ivan Titov.
\newblock Preventing posterior collapse with levenshtein variational
  autoencoder, 2020.

\bibitem{Heljakka_2019}
Ari Heljakka, Arno Solin, and Juho Kannala.
\newblock Pioneer networks: Progressively growing generative autoencoder.
\newblock {\em Lecture Notes in Computer Science}, page 22–38, 2019.

\bibitem{Heljakka_2020}
Ari Heljakka, Arno Solin, and Juho Kannala.
\newblock Towards photographic image manipulation with balanced growing of
  generative autoencoders.
\newblock {\em 2020 IEEE Winter Conference on Applications of Computer Vision
  (WACV)}, Mar 2020.

\bibitem{higgins2017beta}
Irina Higgins, Loic Matthey, Arka Pal, Christopher Burgess, Xavier Glorot,
  Matthew Botvinick, Shakir Mohamed, and Alexander Lerchner.
\newblock $\beta$-vae: Learning basic visual concepts with a constrained
  variational framework.
\newblock {\em ICLR}, 2(5):6, 2017.

\bibitem{hoshen2019non}
Yedid Hoshen, Ke Li, and Jitendra Malik.
\newblock Non-adversarial image synthesis with generative latent nearest
  neighbors.
\newblock In {\em Proceedings of the IEEE Conference on Computer Vision and
  Pattern Recognition}, pages 5811--5819, 2019.

\bibitem{introvae18}
Huaibo Huang, Zhihang Li, Ran He, Zhenan Sun, and Tieniu Tan.
\newblock Introvae: Introspective variational autoencoders for photographic
  image synthesis.
\newblock In {\em Proceedings of the 32nd International Conference on Neural
  Information Processing Systems}, NIPS’18, page 52–63, Red Hook, NY, USA,
  2018. Curran Associates Inc.

\bibitem{huang2017arbitrary}
Xun Huang and Serge Belongie.
\newblock Arbitrary style transfer in real-time with adaptive instance
  normalization.
\newblock In {\em Proceedings of the IEEE International Conference on Computer
  Vision}, pages 1501--1510, 2017.

\bibitem{kalatzis2020variational}
Dimitris Kalatzis, David Eklund, Georgios Arvanitidis, and Søren Hauberg.
\newblock Variational autoencoders with riemannian brownian motion priors,
  2020.

\bibitem{karras2017progressive}
Tero Karras, Timo Aila, Samuli Laine, and Jaakko Lehtinen.
\newblock Progressive growing of gans for improved quality, stability, and
  variation, 2017.

\bibitem{karras2019style}
Tero Karras, Samuli Laine, and Timo Aila.
\newblock A style-based generator architecture for generative adversarial
  networks.
\newblock In {\em Proceedings of the IEEE conference on computer vision and
  pattern recognition}, pages 4401--4410, 2019.

\bibitem{karras2020analyzing}
Tero Karras, Samuli Laine, Miika Aittala, Janne Hellsten, Jaakko Lehtinen, and
  Timo Aila.
\newblock Analyzing and improving the image quality of stylegan.
\newblock In {\em Proceedings of the IEEE/CVF Conference on Computer Vision and
  Pattern Recognition}, pages 8110--8119, 2020.

\bibitem{adam_14}
Diederik Kingma and Jimmy Ba.
\newblock Adam: A method for stochastic optimization.
\newblock {\em International Conference on Learning Representations}, 12 2014.

\bibitem{kingma2018glow}
Durk~P Kingma and Prafulla Dhariwal.
\newblock Glow: Generative flow with invertible 1x1 convolutions.
\newblock In {\em Advances in neural information processing systems}, pages
  10215--10224, 2018.

\bibitem{kingma2014autoencoding}
Diederik~P. Kingma and Max Welling.
\newblock Auto-encoding variational bayes.
\newblock In Yoshua Bengio and Yann LeCun, editors, {\em ICLR}, 2014.

\bibitem{kodali1705convergence}
N Kodali, J Abernethy, J Hays, and Z Kira.
\newblock On convergence and stability of gans. arxiv 2017.
\newblock {\em arXiv preprint arXiv:1705.07215}, 2017.

\bibitem{Krizhevsky2009LearningML}
Alex Krizhevsky et~al.
\newblock Learning multiple layers of features from tiny images.
\newblock 2009.

\bibitem{larsen2015autoencoding}
Anders Boesen~Lindbo Larsen, Søren~Kaae Sønderby, Hugo Larochelle, and Ole
  Winther.
\newblock Autoencoding beyond pixels using a learned similarity metric, 2015.

\bibitem{liu2015faceattributes}
Ziwei Liu, Ping Luo, Xiaogang Wang, and Xiaoou Tang.
\newblock Deep learning face attributes in the wild.
\newblock In {\em Proceedings of International Conference on Computer Vision
  (ICCV)}, December 2015.

\bibitem{JMLR:v21:19-976}
Francesco Locatello, Stefan Bauer, Mario Lucic, Gunnar Raetsch, Sylvain Gelly,
  Bernhard Schoelkopf, and Olivier Bachem.
\newblock A sober look at the unsupervised learning of disentangled
  representations and their evaluation.
\newblock {\em Journal of Machine Learning Research}, 21(209):1--62, 2020.

\bibitem{long2019preventing}
Teng Long, Yanshuai Cao, and Jackie Chi~Kit Cheung.
\newblock Preventing posterior collapse in sequence vaes with pooling, 2019.

\bibitem{makhzani2015adversarial}
Alireza Makhzani, Jonathon Shlens, Navdeep Jaitly, Ian Goodfellow, and Brendan
  Frey.
\newblock Adversarial autoencoders.
\newblock {\em arXiv preprint arXiv:1511.05644}, 2015.

\bibitem{mescheder2018training}
Lars Mescheder, Andreas Geiger, and Sebastian Nowozin.
\newblock Which training methods for gans do actually converge?
\newblock {\em arXiv preprint arXiv:1801.04406}, 2018.

\bibitem{milgrom2002envelope}
Paul Milgrom and Ilya Segal.
\newblock Envelope theorems for arbitrary choice sets.
\newblock {\em Econometrica}, 70(2):583--601, 2002.

\bibitem{nalisnick2018deep}
Eric Nalisnick, Akihiro Matsukawa, Yee~Whye Teh, Dilan Gorur, and Balaji
  Lakshminarayanan.
\newblock Do deep generative models know what they don't know?
\newblock {\em arXiv preprint arXiv:1810.09136}, 2018.

\bibitem{paszke2017automatic}
Adam Paszke, Sam Gross, Soumith Chintala, Gregory Chanan, Edward Yang, Zachary
  DeVito, Zeming Lin, Alban Desmaison, Luca Antiga, and Adam Lerer.
\newblock Automatic differentiation in {PyTorch}.
\newblock In {\em NIPS Autodiff Workshop}, 2017.

\bibitem{pidhorskyi2020adversarial}
Stanislav Pidhorskyi, Donald~A Adjeroh, and Gianfranco Doretto.
\newblock Adversarial latent autoencoders.
\newblock In {\em Proceedings of the IEEE/CVF Conference on Computer Vision and
  Pattern Recognition}, pages 14104--14113, 2020.

\bibitem{razavi2019generating}
Ali Razavi, Aaron van~den Oord, and Oriol Vinyals.
\newblock Generating diverse high-fidelity images with vq-vae-2.
\newblock In {\em Advances in Neural Information Processing Systems}, pages
  14866--14876, 2019.

\bibitem{reed2015deep}
Scott~E Reed, Yi Zhang, Yuting Zhang, and Honglak Lee.
\newblock Deep visual analogy-making.
\newblock In {\em Advances in neural information processing systems}, pages
  1252--1260, 2015.

\bibitem{schonfeld2020u}
Edgar Schonfeld, Bernt Schiele, and Anna Khoreva.
\newblock A u-net based discriminator for generative adversarial networks.
\newblock In {\em Proceedings of the IEEE/CVF Conference on Computer Vision and
  Pattern Recognition}, pages 8207--8216, 2020.

\bibitem{schuldt2004recognizing}
Christian Schuldt, Ivan Laptev, and Barbara Caputo.
\newblock Recognizing human actions: a local svm approach.
\newblock In {\em Proceedings of the 17th International Conference on Pattern
  Recognition, 2004. ICPR 2004.}, volume~3, pages 32--36. IEEE, 2004.

\bibitem{sonderby2016ladder}
Casper~Kaae S{\o}nderby, Tapani Raiko, Lars Maal{\o}e, S{\o}ren~Kaae
  S{\o}nderby, and Ole Winther.
\newblock Ladder variational autoencoders.
\newblock In {\em Advances in neural information processing systems}, pages
  3738--3746, 2016.

\bibitem{srivastava2017veegan}
Akash Srivastava, Lazar Valkov, Chris Russell, Michael~U Gutmann, and Charles
  Sutton.
\newblock Veegan: Reducing mode collapse in gans using implicit variational
  learning.
\newblock In {\em Advances in Neural Information Processing Systems}, pages
  3308--3318, 2017.

\bibitem{szabo2017challenges}
Attila Szab{\'o}, Qiyang Hu, Tiziano Portenier, Matthias Zwicker, and Paolo
  Favaro.
\newblock Challenges in disentangling independent factors of variation.
\newblock {\em arXiv preprint arXiv:1711.02245}, 2017.

\bibitem{tomczak2018vae}
Jakub Tomczak and Max Welling.
\newblock Vae with a vampprior.
\newblock In {\em International Conference on Artificial Intelligence and
  Statistics}, pages 1214--1223, 2018.

\bibitem{ulyanov2017takes}
Dmitry Ulyanov, Andrea Vedaldi, and Victor Lempitsky.
\newblock It takes (only) two: Adversarial generator-encoder networks.
\newblock {\em arXiv preprint arXiv:1704.02304}, 2017.

\bibitem{vahdat2020nvae}
Arash Vahdat and Jan Kautz.
\newblock Nvae: A deep hierarchical variational autoencoder, 2020.

\bibitem{xu2020learning}
Hongteng Xu, Dixin Luo, Ricardo Henao, Svati Shah, and Lawrence Carin.
\newblock Learning autoencoders with relational regularization, 2020.

\bibitem{yu15lsun}
Fisher Yu, Yinda Zhang, Shuran Song, Ari Seff, and Jianxiong Xiao.
\newblock Lsun: Construction of a large-scale image dataset using deep learning
  with humans in the loop.
\newblock {\em arXiv preprint arXiv:1506.03365}, 2015.

\bibitem{zhang2018unreasonable}
Richard Zhang, Phillip Isola, Alexei~A Efros, Eli Shechtman, and Oliver Wang.
\newblock The unreasonable effectiveness of deep features as a perceptual
  metric.
\newblock In {\em Proceedings of the IEEE conference on computer vision and
  pattern recognition}, pages 586--595, 2018.

\bibitem{zhao2016energybased}
Junbo Zhao, Michael Mathieu, and Yann LeCun.
\newblock Energy-based generative adversarial network, 2016.

\bibitem{zisselman2020deep}
Ev Zisselman and Aviv Tamar.
\newblock Deep residual flow for out of distribution detection.
\newblock In {\em Proceedings of the IEEE/CVF Conference on Computer Vision and
  Pattern Recognition}, pages 13994--14003, 2020.

\end{thebibliography}
}
\clearpage
\onecolumn
\section{Appendix}
\subsection{Complete Algorithm}
Algorithm \ref{alg:training_complte} depicts the training procedure of Soft-IntroVAE. The difference from Algorithm \ref{alg:training_short} is the additional generated reconstructions, denoted with $X_r$, which are given the same treatment as the 'fake' generated data, denoted with $X_f$. In practice, following \cite{introvae18}, we found it better to consider all generated data from the decoder, reconstructions ($X_r = D(E(x))$) and samples from $p(z)$, as 'fake' samples to speed-up convergence. In Algorithm \ref{alg:training_complte}, $L_{rec}$ is the reconstruction error function (e.g. mean squared error -- MSE) and $KL$ is a function that calculates the KL divergence. The training flow of Soft-IntroVAE is further depicted in Figure \ref{fig:apndx_training_flow}.

\begin{center}
\begin{figure}
    \centering
    \includegraphics[width=0.8\textwidth]{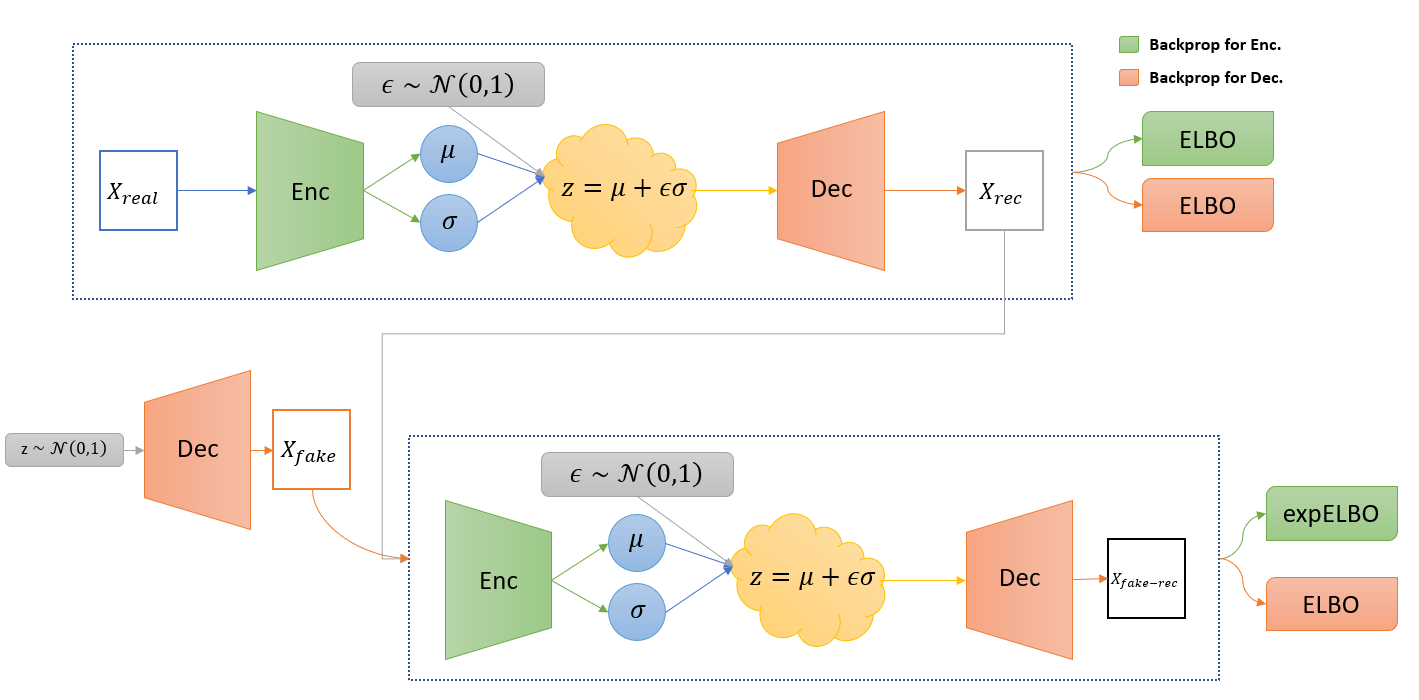}
    \caption{Training flow of Soft-IntroVAE. The ELBO for real samples is optimized for both encoder and decoder, while the encoder also optimizes the expELBO to 'push away' generated samples from the latent space, and the decoder optimizes the ELBO for the generated samples to 'fool' the encoder.}
    \label{fig:apndx_training_flow}
\end{figure}
\end{center}

\label{apndx:algo}
\begin{algorithm}[t]
\caption{Training Soft-IntroVAE}
\label{alg:training_complte}
\begin{algorithmic}[1]
\Require $\beta_{rec}, \beta_{kl}, \beta_{neg}, \gamma_r$
\State \(\phi_E, \theta_D \gets\) Initialize~ network~ parameters
\State $s \gets 1 / \text{input dim}$ \Comment{Scaling constant}
\While{not converged}
\State \(X \gets\) Random mini-batch from dataset
\State \(Z \gets\) \(E(X)\) \Comment{Encode}
\State \(Z_f \gets\) Samples from prior \(N(0,I)\)
\Procedure {UpdateEncoder}{$\phi_E$}
    \State \(X_r \gets\) \(D(Z)\), \(X_f \gets\) \(D(Z_f)\) \Comment{Decode}
    \State \(Z_{rf} \gets\) \(E(X_r)\), \(Z_{ff} \gets\) \(E(X_f)\)
    \State \(X_{rf} \gets\) \(D(Z_{rf})\), \(X_{ff} \gets\) \(D(Z_{ff})\)
    \State $\text{ELBO} \gets s \cdot ELBO(\beta_{rec}, \beta_{kl}, X, X_r, Z) $
    \State $\text{ELBO}_r \gets ELBO(\beta_{rec}, \beta_{neg}, X_r, X_{rf}, Z_{rf}) $
    \State $\text{ELBO}_f \gets  ELBO(\beta_{rec}, \beta_{neg}, X_f, X_{ff}, Z_{ff}) $
    \State $\text{expELBO}_r \gets 0.5\exp(2s \cdot \text{ELBO}_r)$
    \State $\text{expELBO}_f \gets 0.5\exp(2s \cdot \text{ELBO}_f)$
    \State $L_E \gets \text{ELBO} - 0.5\cdot (\text{expELBO}_r + \text{expELBO}_f) $
    \State \( \phi_E \gets \phi_E + \eta\nabla_{\phi_E} ( L_E ) \) \Comment{Adam update (ascend)}
\EndProcedure
\Procedure {UpdateDecoder}{$\theta_D$}
    \State \(X_r \gets\) \(D(Z)\), \(X_f \gets\) \(D(Z_f)\) \Comment{Decode}
    \State \(Z_{rf} \gets\) \(E(X_r)\), \(Z_{ff} \gets\) \(E(X_f)\)
    \State \(X_{rf} \gets\) \(sg(D(Z_{rf}))\), \(X_{ff} \gets\) \(sg(D(Z_{ff}))\)
    \State $\text{ELBO} \gets \beta_{rec}L_{rec}(X, X_r) $
    \State $\text{ELBO}_r \! \gets \!\! ELBO(\gamma_r \cdot\beta_{rec}, \beta_{kl}, X_r, X_{rf}, Z_{rf}) $
    \State $\text{ELBO}_f \! \gets \!\! ELBO(\gamma_r \cdot \beta_{rec}, \beta_{kl}, X_f, X_{ff}, Z_{ff}) $
    \State $L_D \gets s \cdot (\text{ELBO}  + 0.5\cdot (\text{ELBO}_r + \text{ELBO}_f) ) $
    \State \(\theta_D \gets \theta_D + \eta\nabla_{\theta_D} (L_D ) \)\Comment{Adam update (ascend)}
\EndProcedure
\EndWhile
\State
\Function{ELBO}{$\beta_{rec}, \beta_{kl}, X, X_r, Z$}
    \State $ELBO \gets -1 \cdot(\beta_{rec}L_{rec}(X, X_r) +\beta_{kl} KL(Z))$
    \State \textbf{return} $ELBO$
\EndFunction
\end{algorithmic}
\end{algorithm}

\subsubsection{IntroVAE and Soft-IntroVAE Objectives}

\paragraph{Soft-IntroVAE} Expanding S-IntroVAE's objective, which is \textit{minimized}, from Eq. \ref{eq:sintrovae_loss} with the complete set of hyperparameters:
\begin{equation}
\label{eq:sintrovae_loss_complete}
\begin{split}
    \mathcal{L}_{E_{\phi}}(x,z) &= s \cdot(\beta_{rec}\mathcal{L}_r(x) +\beta_{kl}KL(x))
    +\frac{1}{2} \exp(-2s\cdot (\beta_{rec}\mathcal{L}_r(D_{\theta}(z)) + \beta_{neg}KL(D_{\theta}(z)))), \\
    \mathcal{L}_{D_{\theta}}(x,z) &= s \cdot \beta_{rec}\mathcal{L}_r(x) +s \cdot(\beta_{kl}KL(D_{\theta}(z)) +\gamma_r \cdot \beta_{rec}\mathcal{L}_r(D_{\theta}(z))),
\end{split}
\end{equation}
where $\mathcal{L}_r(x) = - \mathbb{E}_{q_{\phi}(z\mid x)}\left[\log p_{\theta}(x \mid z)\right]$ denotes the reconstruction error. 

\paragraph{IntroVAE} Expanding IntroVAE's objective,  which is \textit{minimized}, from Eq. \ref{eq:introvae_practice} with the complete set of hyperparameters:
\begin{equation*}
    \begin{split}
        \mathcal{L}_E(x,z) & = \beta_{rec}\mathcal{L}_r(x) +\beta_{kl}KL(x)
         +\beta_{neg}[m - KL(E(D_{\theta}(z)))]^{+} \\
        \mathcal{L}_D(x,z) & = \beta_{rec}\mathcal{L}_r(x) + \beta_{neg}KL(E(D_{\theta}(z))).
    \end{split}
\end{equation*}

Note that the difference in hyperparameters from S-IntroVAE objectives is the added $m$ hyperparameter for the hard-margin loss in the encoder. In S-IntroVAE, the objectives also include the reconstruction terms for the generated data, where in the decoder they are preceded by $\gamma$ which is set $1e-8$ in all experiments. Also, recall that $s$ is a normalizing constant that is set to the inverse of the input dimensions, and is not required in IntroVAE.

\subsection{Datasets, Architectures and Hyperparameters}
\label{apndx:hyper}

We implement our method in PyTorch~\cite{paszke2017automatic}.  For all experiments, we used the Adam~\cite{adam_14} optimizer with the default parameters, and $\gamma_{r}$ was set to $1e-8$ independently of the dataset. In practice, $\gamma_r$ should be set to a small value.
Note that setting $\gamma_r=0$ can cause a degradation in performance. Experiments with the style-based architecture were run on a machine with 4 Nvidia RTX 2080 GPUs, while the rest used a machine with one GPU of the same type. In what follows, we detail the dataset-specific hyperparameters.

\subsubsection{2D Experiments}
\label{apndx:hyper_2d}
The architecture for all methods is a simple 3-layer fully-connected network with 256 hidden units and ReLU activations, and the latent space dimension is 2. We used a learning rate of $2e-4$, batch size of 512 and ran a total of 30,000 iterations per dataset. We ran an extensive hyperparameter grid search of 81 runs for the standard VAE, 210 runs for S-IntroVAE, and 1260 runs for IntroVAE. The range of the search was $[0.05, 1.0]$ for $\beta_{kl}$ and $\beta_{rec}$, $[\beta_{kl}, 5\beta_{kl}]$ for $\beta_{neg}$ and $[1, 10]$ for $m$. The best combinations of hyperparameters are provided in Table \ref{tab:2d_hyperparameters}.

\begin{table}[ht]
\begin{center}
\begin{scriptsize}
\begin{tabular}{lccccc}
    & & VAE & IntroVAE & Soft-IntroVAE \\
    \hline
    \multirow{3}{*}{8 Gaussians}
    &$\beta_{rec}$ & 0.8 &0.3 & 0.2  &\\
    &$\beta_{kl}$ & 0.05 & 0.5 & 0.3 & \\
    &$\beta_{neg}$ &- & 1.0 & 0.9 &\\
    &m              &- & 1.0 & -   &\\
    \hline
    \multirow{3}{*}{Spiral}
    &$\beta_{rec}$ & 1.0 & 0.2 & 0.2  &\\
    &$\beta_{kl}$ & 0.05 & 0.5 & 0.5 & \\
    &$\beta_{neg}$ &- & 0.5 & 1.0 &\\
    &m              &- & 2.0 & -   &\\
    \hline
    \multirow{3}{*}{Checkerboard}
    &$\beta_{rec}$ & 0.8 & 0.4 & 0.2  &\\
    &$\beta_{kl}$ & 0.1 & 0.2 & 0.1 & \\
    &$\beta_{neg}$ &- & 0.2 & 0.2 &\\
    &m              &- & 8.0 & -   &\\
    \hline
    \multirow{3}{*}{Rings}
    &$\beta_{rec}$ & 0.8 & 0.8 & 0.2  &\\
    &$\beta_{kl}$ & 0.05 & 0.5 & 0.2 & \\
    &$\beta_{neg}$ &- & 0.5 & 1.0 &\\
    &m              &- & 5.0 & -   &\\
    \hline
\end{tabular}
\end{scriptsize}
\end{center}
\caption{Hyperparameters for the 2D datasets.}
\label{tab:2d_hyperparameters}
\end{table}

\subsubsection{Image Generation}
\label{apndx:hyper_image}
CIFAR-10~\cite{Krizhevsky2009LearningML} consists of 60,000 32x32 colour images in 10 classes, with 6,000 images per class. We use the official split of 50,000 training images and 10,000 test images and evaluate in both unconditional and class-conditional settings. We use IntroVAE's \cite{introvae18} architecture\footnote{\url{https://github.com/hhb072/IntroVAE}} with a latent dimension of 128 and train the model for 220 epochs with a learning rate of $2e-4$ and batch size of 32. The hyperparameters are $\beta_{rec}=\beta_{kl}=1$ and $\beta_{neg}=256$.
IntroVAE's encoder-decoder general architecture with residual-based convolutional layers for images at resolution of 1024$\times$1024 is depicted in Table \ref{tab:introvae_arch}. For CIFAR-10 we used 3 residual blocks in both encoder and decoder with channels (64, 128, 256).

\begin{table*}[t]
    \begin{tabular}{|lcc|}
    \hline
    \textbf{Encoder} & Act. & Output shape  \\
    \hline
    Input image & -- & $\makebox[\widthof{512}][c]{3}\times\makebox[\widthof{512}][c]{1024}\times\makebox[\widthof{512}][c]{1024}$  \\
    Conv  & $5\times5, 16$ & $\makebox[\widthof{512}][c]{16}\times\makebox[\widthof{512}][c]{1024}\times\makebox[\widthof{512}][c]{1024}$  \\
    AvgPool & -- & $\makebox[\widthof{512}][c]{16}\times\makebox[\widthof{512}][c]{512}\times\makebox[\widthof{512}][c]{512}$  \\
    \hline
    Res-block & $\left[ {\begin{array}{*{20}{c}}{1 \times 1,}&{32}\\
    {3 \times 3,}&{32}\\
    {3 \times 3,}&{32}
    \end{array}} \right]$ & $\makebox[\widthof{512}][c]{32}\times\makebox[\widthof{512}][c]{512}\times\makebox[\widthof{512}][c]{512}$  \\
    AvgPool & -- & $\makebox[\widthof{512}][c]{32}\times\makebox[\widthof{512}][c]{256}\times\makebox[\widthof{512}][c]{256}$  \\
    \hline
    Res-block & $\left[ {\begin{array}{*{20}{c}}{1 \times 1,}&{64}\\
    {3 \times 3,}&{64}\\
    {3 \times 3,}&{64}
    \end{array}} \right]$ & $\makebox[\widthof{512}][c]{64}\times\makebox[\widthof{512}][c]{256}\times\makebox[\widthof{512}][c]{256}$  \\
    AvgPool & -- & $\makebox[\widthof{512}][c]{64}\times\makebox[\widthof{512}][c]{128}\times\makebox[\widthof{512}][c]{128}$  \\
    \hline
    Res-block & $\left[ {\begin{array}{*{20}{c}}{1 \times 1,}&{128}\\
    {3 \times 3,}&{128}\\
    {3 \times 3,}&{128}
    \end{array}} \right]$ & $\makebox[\widthof{512}][c]{128}\times\makebox[\widthof{512}][c]{128}\times\makebox[\widthof{512}][c]{128}$  \\
    AvgPool & -- & $\makebox[\widthof{512}][c]{128}\times\makebox[\widthof{512}][c]{64}\times\makebox[\widthof{512}][c]{64}$  \\
    \hline
    Res-block & $\left[ {\begin{array}{*{20}{c}}{1 \times 1,}&{256}\\
    {3 \times 3,}&{256}\\
    {3 \times 3,}&{256}
    \end{array}} \right]$ & $\makebox[\widthof{512}][c]{256}\times\makebox[\widthof{512}][c]{64}\times\makebox[\widthof{512}][c]{64}$  \\
    AvgPool & -- & $\makebox[\widthof{512}][c]{256}\times\makebox[\widthof{512}][c]{32}\times\makebox[\widthof{512}][c]{32}$  \\
    \hline
    Res-block & $\left[ {\begin{array}{*{20}{c}}{1 \times 1,}&{512}\\
    {3 \times 3,}&{512}\\
    {3 \times 3,}&{512}
    \end{array}} \right]$ & $\makebox[\widthof{512}][c]{512}\times\makebox[\widthof{512}][c]{32}\times\makebox[\widthof{512}][c]{32}$  \\
    AvgPool & -- & $\makebox[\widthof{512}][c]{512}\times\makebox[\widthof{512}][c]{16}\times\makebox[\widthof{512}][c]{16}$  \\
    \hline
    Res-block & $\left[ {\begin{array}{*{20}{c}}{1 \times 1,}&{512}\\
    {3 \times 3,}&{512}\\
    {3 \times 3,}&{512}
    \end{array}} \right]$ & $\makebox[\widthof{512}][c]{512}\times\makebox[\widthof{512}][c]{16}\times\makebox[\widthof{512}][c]{16}$  \\
    AvgPool & -- & $\makebox[\widthof{512}][c]{512}\times\makebox[\widthof{512}][c]{8}\times\makebox[\widthof{512}][c]{8}$  \\
    \hline
    Res-block & $\left[ {\begin{array}{*{20}{c}}
    {3 \times 3,}&{512}\\
    {3 \times 3,}&{512}
    \end{array}} \right]$ & $\makebox[\widthof{512}][c]{512}\times\makebox[\widthof{512}][c]{8}\times\makebox[\widthof{512}][c]{8}$  \\
    AvgPool & -- & $\makebox[\widthof{512}][c]{512}\times\makebox[\widthof{512}][c]{4}\times\makebox[\widthof{512}][c]{4}$  \\
    \hline
    Res-block & $\left[ {\begin{array}{*{20}{c}}
    {3 \times 3,}&{512}\\
    {3 \times 3,}&{512}
    \end{array}} \right]$ & $\makebox[\widthof{512}][c]{512}\times\makebox[\widthof{512}][c]{4}\times\makebox[\widthof{512}][c]{4}$  \\
    Reshape & -- & $\makebox[\widthof{512}][c]{8192}\times\makebox[\widthof{512}][c]{1}\times\makebox[\widthof{512}][c]{1}$ \\
    \hline
    FC-1024 & -- & $\makebox[\widthof{512}][c]{1024}\times\makebox[\widthof{512}][c]{1}\times\makebox[\widthof{512}][c]{1}$ \\
    \hline
    Split & -- & 512, 512 \\
    \hline
    \end{tabular}
    \hfill
    \begin{tabular}{|lcc|}
    \hline
    \textbf{Decoder} & Act. & Output shape \\
    \hline
    Latent vector & -- & $\makebox[\widthof{512}][c]{512}\times\makebox[\widthof{512}][c]{1}\times\makebox[\widthof{512}][c]{1}$  \\
    FC-8192 & {\tiny ReLU} & $\makebox[\widthof{512}][c]{8192}\times\makebox[\widthof{512}][c]{1}\times\makebox[\widthof{512}][c]{1}$ \\
    \hline
    Reshape & -- & $\makebox[\widthof{512}][c]{512}\times\makebox[\widthof{512}][c]{4}\times\makebox[\widthof{512}][c]{4}$ \\
    Res-block & $\left[ {\begin{array}{*{20}{c}}{3 \times 3,}&{512}\\
    {3 \times 3,}&{512}
    \end{array}} \right]$ & $\makebox[\widthof{512}][c]{512}\times\makebox[\widthof{512}][c]{4}\times\makebox[\widthof{512}][c]{4}$  \\
    \hline
    Upsample & -- & $\makebox[\widthof{512}][c]{512}\times\makebox[\widthof{512}][c]{8}\times\makebox[\widthof{512}][c]{8}$  \\
    Res-block & $\left[ {\begin{array}{*{20}{c}}{3 \times 3,}&{512}\\
    {3 \times 3,}&{512}
    \end{array}} \right]$ & $\makebox[\widthof{512}][c]{512}\times\makebox[\widthof{512}][c]{8}\times\makebox[\widthof{512}][c]{8}$  \\
    \hline
    Upsample & -- & $\makebox[\widthof{512}][c]{512}\times\makebox[\widthof{512}][c]{16}\times\makebox[\widthof{512}][c]{16}$  \\
    Res-block & $\left[ {\begin{array}{*{20}{c}}{3 \times 3,}&{512}\\
    {3 \times 3,}&{512}
    \end{array}} \right]$ & $\makebox[\widthof{512}][c]{512}\times\makebox[\widthof{512}][c]{16}\times\makebox[\widthof{512}][c]{16}$  \\
    \hline
    Upsample & -- & $\makebox[\widthof{512}][c]{512}\times\makebox[\widthof{512}][c]{32}\times\makebox[\widthof{512}][c]{32}$  \\
    Res-block & $\left[ {\begin{array}{*{20}{c}}{1 \times 1,}&{256}\\
    {3 \times 3,}&{256}\\
    {3 \times 3,}&{256}
    \end{array}} \right]$ & $\makebox[\widthof{512}][c]{256}\times\makebox[\widthof{512}][c]{32}\times\makebox[\widthof{512}][c]{32}$  \\
    \hline
    Upsample & -- & $\makebox[\widthof{512}][c]{256}\times\makebox[\widthof{512}][c]{64}\times\makebox[\widthof{512}][c]{64}$  \\
    Res-block & $\left[ {\begin{array}{*{20}{c}}{1 \times 1,}&{128}\\
    {3 \times 3,}&{128}\\
    {3 \times 3,}&{128}
    \end{array}} \right]$ & $\makebox[\widthof{512}][c]{128}\times\makebox[\widthof{512}][c]{64}\times\makebox[\widthof{512}][c]{64}$  \\
    \hline
    Upsample & -- & $\makebox[\widthof{512}][c]{128}\times\makebox[\widthof{512}][c]{128}\times\makebox[\widthof{512}][c]{128}$  \\
    Res-block & $\left[ {\begin{array}{*{20}{c}}{1 \times 1,}&{64}\\
    {3 \times 3,}&{64}\\
    {3 \times 3,}&{64}
    \end{array}} \right]$ & $\makebox[\widthof{512}][c]{64}\times\makebox[\widthof{512}][c]{128}\times\makebox[\widthof{512}][c]{128}$  \\
    \hline
    Upsample & -- & $\makebox[\widthof{512}][c]{64}\times\makebox[\widthof{512}][c]{256}\times\makebox[\widthof{512}][c]{256}$  \\
    Res-block & $\left[ {\begin{array}{*{20}{c}}{1 \times 1,}&{32}\\
    {3 \times 3,}&{32}\\
    {3 \times 3,}&{32}
    \end{array}} \right]$ & $\makebox[\widthof{512}][c]{32}\times\makebox[\widthof{512}][c]{256}\times\makebox[\widthof{512}][c]{256}$  \\
    \hline
    Upsample & -- & $\makebox[\widthof{512}][c]{32}\times\makebox[\widthof{512}][c]{512}\times\makebox[\widthof{512}][c]{512}$  \\
    Res-block & $\left[ {\begin{array}{*{20}{c}}{1 \times 1,}&{16}\\
    {3 \times 3,}&{16}\\
    {3 \times 3,}&{16}
    \end{array}} \right]$ & $\makebox[\widthof{512}][c]{16}\times\makebox[\widthof{512}][c]{512}\times\makebox[\widthof{512}][c]{512}$  \\
    \hline
    Upsample & -- & $\makebox[\widthof{512}][c]{16}\times\makebox[\widthof{512}][c]{1024}\times\makebox[\widthof{512}][c]{1024}$  \\
    Res-block & $\left[ {\begin{array}{*{20}{c}}
    {3 \times 3,}&{16}\\
    {3 \times 3,}&{16}
    \end{array}} \right]$ & $\makebox[\widthof{512}][c]{16}\times\makebox[\widthof{512}][c]{1024}\times\makebox[\widthof{512}][c]{1024}$  \\
    \hline
    Conv  & $5\times5, 3$ & $\makebox[\widthof{512}][c]{3}\times\makebox[\widthof{1024}][c]{1024}\times\makebox[\widthof{1024}][c]{1024}$  \\
    \hline
    \end{tabular}
\caption{IntroVAE's general architecture for images at resolution $1024 \times 1024$.}
\label{tab:introvae_arch}
\end{table*}

CelebA-HQ~\cite{karras2017progressive} is an improved version of CelebA~\cite{liu2015faceattributes}, and consists of a subset of 30,000 high-quality 1024x1024 images of celebrities, which are split to 29,000 train images and 1,000 test images. FFHQ~\cite{karras2019style} is a high-quality image dataset consisting of 70,000 images of people faces aligned and cropped at resolution of 1024x1024, split to 60,000 train images and 10,000 test images.  

\paragraph{Style-based architecture} The decoder in the style-based architecture borrows the same properties of StyleGAN's \cite{karras2019style} generator, while the encoder is designed after the novel architecture in ALAE~\cite{pidhorskyi2020adversarial}\footnote{\url{https://github.com/podgorskiy/ALAE}}. Every layer in StyleGAN's generator is driven by a style input $w \in \mathcal{W}$, which requires that the encoder will also encode \textit{styles}. Thus, in the style-based architecture the layers in the encoder and decoder are symmetric, such that every layer extracts and injects styles, correspondingly. This is made possible by using Instance Normalization (IN) layers~\cite{huang2017arbitrary}, which provide instance means and standard deviations for every \textit{channel}. 

Mathematically, let $y_i^E$ denote the output of the $i$-th layer in the encoder $E$, the IN module extracts the statistics $\mu(y_i^E)$ and $\sigma(y_i^E)$, representing the style at that level. The second output of the IN module is the normalized version of the input which continues down the pipeline with no more style information. Finally, the latent style variable, $w$, is a weighted sum of the extracted styles: $$ w=\sum_{i=1}^N C_i \begin{bmatrix}\mu(y_i^E) \\ \sigma(y_i^E) \end{bmatrix},$$ where $C_i$'s are learned parameters and $N$ is the number of layers. The style latent variable is then mapped with a fully-connected network to the mean, $\mu_q$, and standard deviation, $\sigma_q$, of the Gaussian latent variable $z \in \mathcal{N}(\mu_q, \sigma_q^2)$, which is done efficiently using the reparameterization trick.

Symmetrically, the latent variable $z$ is mapped back to a style latent variable $w$ using a fully-connected network, which serves as inputs to the Adaptive Instance Normalization (AdaIN) layers~\cite{huang2017arbitrary} in the decoder. The complete style-based architecture is depicted in Figure \ref{fig:style_vae_arch}. In our experiments the latent variables $z$ and $w$ have the same dimensions of 512, the mapping from style to latent in $E$ has 3 layers, while the mapping from latent to style in $D$ has 8 layers, both with 512 hidden units in each layer.

The training using the style-based architecture is done is a progressive growing fashion, similar to \cite{karras2017progressive, karras2019style, pidhorskyi2020adversarial}, where we start from low resolution (4$\times$4 pixels) and progressively increase the resolution by smoothly blending in new blocks to $E$ and $D$. For CelebA-HQ $\beta_{rec}=0.05$, and for FFHQ $\beta_{rec}=0.1$, while for both datasets $\beta_{kl}=0.2$ and $\beta_{neg}=512$, and the maximal learning rate is $1.5e-3$. The CelebA-HQ model is trained for 230 epochs and the FFHQ model for 270 epochs, where the training reaches the 256$\times$256 resolution at epoch 180 (30 epoch per resolution until 256$\times$256).

\subsection{Image Translation}
\label{apndx:hyper_translation}
For the image translation experiments, we use the architecture proposed in LORD \cite{gabbay2019demystifying}, and use two encoders, one for the class and one for the content, where the latent representation of the class controls the adaptive Instance Normalization \cite{huang2017arbitrary} of the latent representation of the content in the decoder. More specifically, the encoder is composed of convolutional layers with channels (64, 128, 256, 256), followed by 3 fully-connected layers to produce the parameters of the Gaussian latent variable. The decoder consists of 3 fully-connected layers followed by 6 convolutional layers, where the first 4 are preceded by an upsampling layer and followed by AdaIN normalization, that uses the latent representation of the class. All layers are activated with LeakyReLU.
This architecture is depicted in Figure \ref{fig:disentangle_vae_arch}.

For this specific choice of architecture, note that the KL terms update each encoder separately, the reconstruction term jointly updates both encoders, as it is a function of the latents from both encoders. 

Similar to \cite{gabbay2019demystifying}, all images are resized to 64$\times$64 resolution and we set the latent dimension of the class to be 256, and 128 for the content. In all experiments we used Adam optimizer with a learning rate of $1e-4$, batch size of 64 and ran a total of 400 train epochs.

\paragraph{Cars3D dataset}  The Cars3D dataset \cite{reed2015deep} consists of 183 car CAD models, labelled with 24 azimuth directions and 4 elevations. For this dataset, the class is considered to be the car model and the azimuth and elevation as the content. We use 163 car models for training and the other 20 are held out for testing. As Cars3D includes ground-truth labels, we are able to test the quality of disentanglement using the same evaluation procedure as in \cite{gabbay2019demystifying}, by measuring the content transfer reconstruction loss. As suggested by \cite{gabbay2019demystifying}, we replace the pixel-wise MSE reconstruction loss with the VGG perceptual loss as implemented by \cite{hoshen2019non}. The hyperparameters used for this dataset: $\beta_{kl}^{content}=\beta_{kl}^{class}=1.0, \beta_{rec}=0.5, \beta_{neg}^{content}=2048$ and $\beta_{kl}^{class}=1024$.

\paragraph{KTH dataset} We further evaluate on the KTH dataset \cite{schuldt2004recognizing} which contains videos of 25 people performing different activities. For training our model, we extract grayscale image frames from all of the videos. As there are no ground-truth labels, we \textit{assume} the class is the person identity and the content is other unlabeled transitory attributes such as skeleton position. Similarly to \cite{gabbay2019demystifying}, due to the very limited amount of subjects, we use all the identities
for training, holding out 10\% of the images for testing. Moreover, we found that using MSE pixel-wise loss worked better for the grayscale images than the VGG perceptual loss. The hyperparameters used for this dataset: $\beta_{kl}^{content}=\beta_{kl}^{class}=0.5, \beta_{rec}=1.0, \beta_{neg}^{content}=2048$ and $\beta_{kl}^{class}=1024$.

\subsection{Additional Results}
\label{apndx_results}
In this section, we provide additional results from the experiments we described. 

\subsubsection{Image Generation and Reconstruction}
\label{apndx_im_gen}
\paragraph{CIFAR-10 dataset} We trained two types of models: (1) unconditional model and (2) class-conditional model. In Figure \ref{fig:apndx_cifar_samples} we present random (i.e., no cherry-picking) samples from a trained unconditional model (FID: 4.6), and Figure \ref{fig:apndx_cifar_recons} presents random reconstructions from the test set. For the conditional model, we used a one-hot vector representation for the class, and trained a conditional VAE (CVAE) using Soft-IntroVAE's objectives. Random samples from the class-conditional model can be seen in Figure \ref{fig:apndx_cifar_samples_cond} (FID: 4.07), and random reconstructions from the test set in Figure \ref{fig:apndx_cifar_recons_cond}. It can be seen that when including a supervision signal (class labels), the samples tend to be slightly more structured, which is also reflected in the FID score.

\begin{figure}
     \centering
        \begin{subfigure}[b]{0.4\textwidth}
             \centering
             \includegraphics[width=\textwidth]{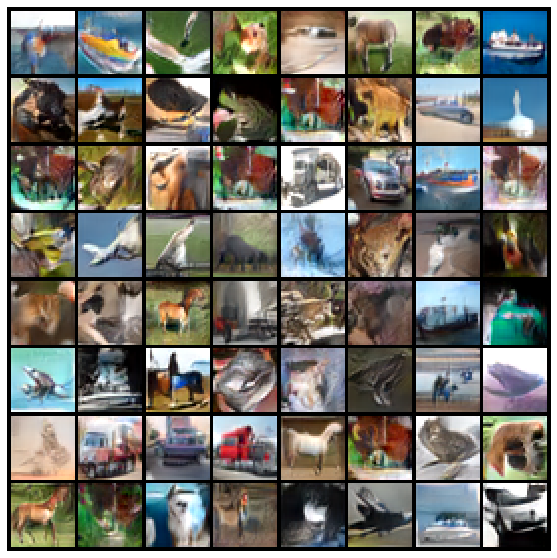}
             \caption{Generated samples (FID: 4.6).}
             \label{fig:apndx_cifar_samples}
        \end{subfigure}
     \begin{subfigure}[b]{0.4\textwidth}
         \centering
         \includegraphics[width=\textwidth]{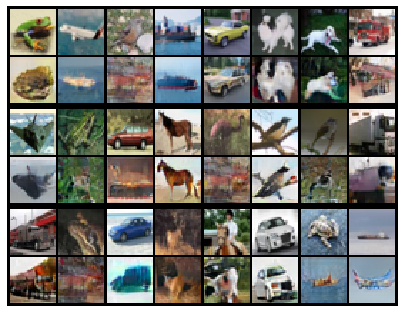}
         \caption{Reconstructions (odd row: real, even row: reconstruction).}
         \label{fig:apndx_cifar_recons}
     \end{subfigure}
        \caption{Generated samples (left) and reconstructions (right) of test data from an unconditional S-IntroVAE trained on CIFAR-10.}
        \label{fig:fig:apndx_cifar}
\end{figure}

\begin{figure}
     \centering
        \begin{subfigure}[b]{0.4\textwidth}
             \centering
             \includegraphics[width=\textwidth]{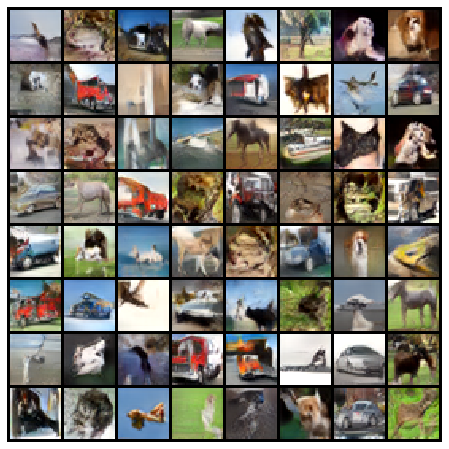}
             \caption{Generated samples (FID: 4.07).}
             \label{fig:apndx_cifar_samples_cond}
        \end{subfigure}
     \begin{subfigure}[b]{0.4\textwidth}
         \centering
         \includegraphics[width=\textwidth]{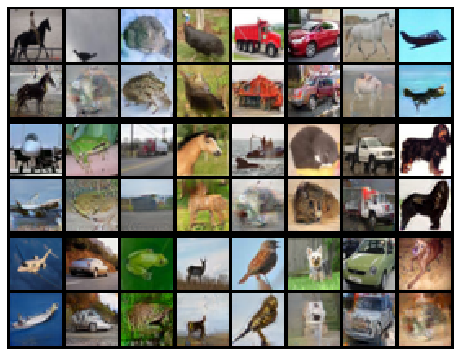}
         \caption{Reconstructions (odd row: real, even row: reconstruction).}
         \label{fig:apndx_cifar_recons_cond}
     \end{subfigure}
        \caption{Generated samples (left) and reconstructions (right) of test data from a class-conditional S-IntroVAE trained on CIFAR-10.}
        \label{fig:fig:apndx_cifar_cond}
\end{figure}

\paragraph{CelebA-HQ dataset} Results from a style-based S-IntroVAE trained on CelebA-HQ at resolution 256$\times$256 (FID: 18.63) are presented in Figure \ref{fig:image_samples_celeb}. Additional random (i.e., no cherry-picking) generated images from a style-based S-IntroVAE trained on CelebA-HQ at resolution 256$\times$256 are presented in Figure \ref{fig:apndx_celeb_samples} and random reconstructions of unseen data during training are presented in Figure \ref{fig:apndx_celeb_recons}.

\begin{center}
\begin{figure}
    \centering
    \includegraphics[width=0.8\textwidth]{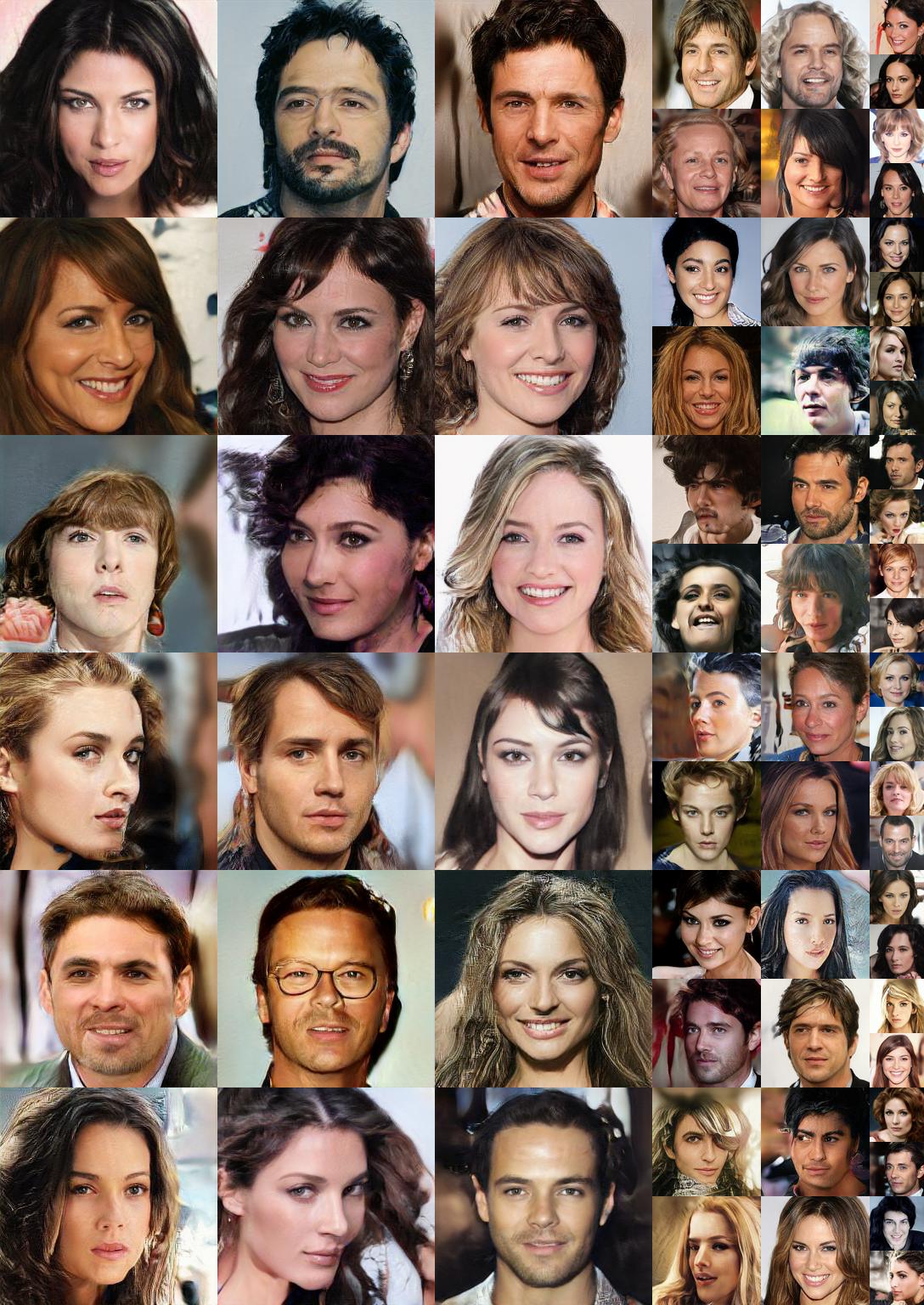}
    \caption{Generated samples from a style-based S-IntroVAE trained on CelebA-HQ at 256x256 resolution (FID: 18.63).}
    \label{fig:apndx_celeb_samples}
\end{figure}
\end{center}

\begin{center}
\begin{figure}
    \centering
    \includegraphics[width=0.8\textwidth]{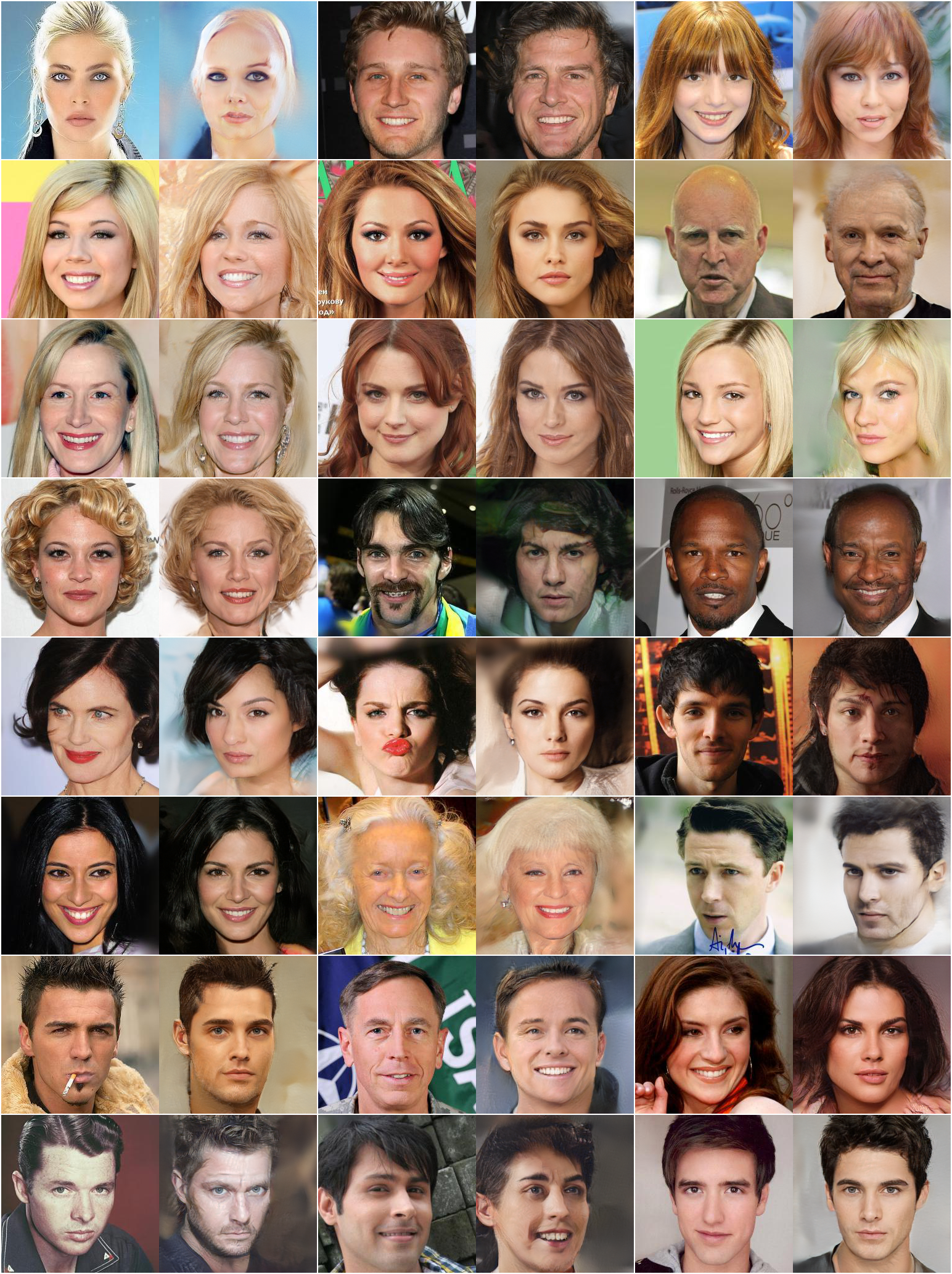}
    \caption{Reconstructions of test data from a style-based S-IntroVAE trained on CelebA-HQ at 256x256 resolution (left: real, right: reconstruction).}
    \label{fig:apndx_celeb_recons}
\end{figure}
\end{center}

\begin{figure}
     \centering
        \begin{subfigure}[b]{0.4\textwidth}
             \centering
             \includegraphics[width=\textwidth]{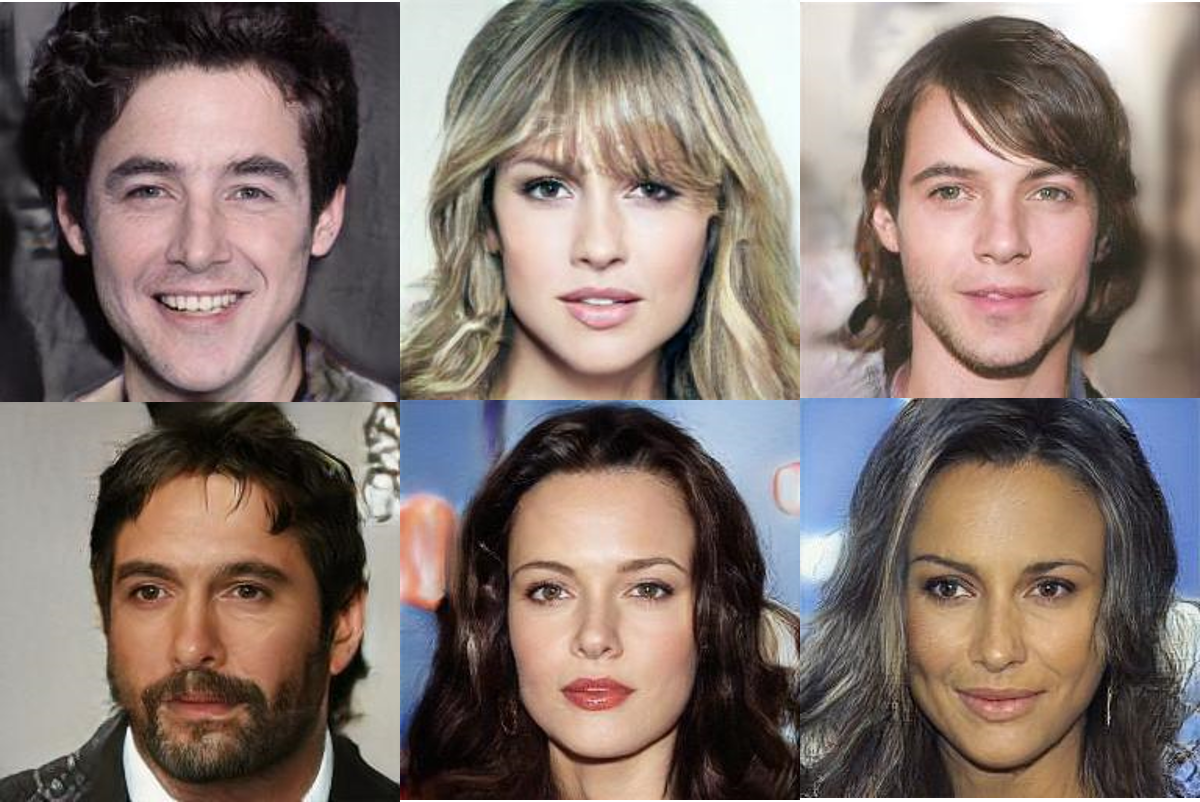}
             \caption{Generated samples from S-IntroVAE (FID: 18.63).}
             \label{fig:samples_celebhq}
        \end{subfigure}
     \begin{subfigure}[b]{0.4\textwidth}
         \centering
         \includegraphics[width=0.5\textwidth]{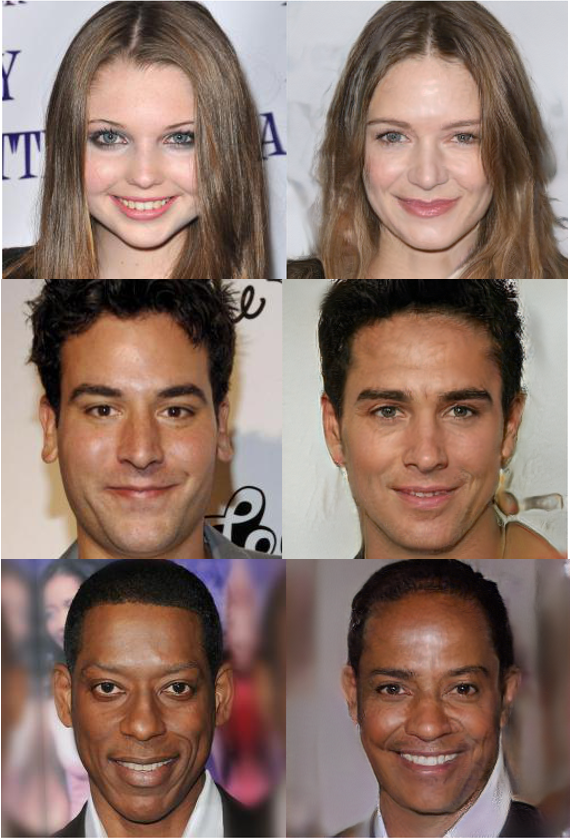}
         \caption{Reconstructions (left: real, right: reconstruction).}
         \label{fig:celeba_recons}
     \end{subfigure}
        \caption{Generated samples (left) and reconstructions (right) of test data from a style-based S-IntroVAE trained on CelebA-HQ at 256$\times$256 resolution. It is recommended to zoom-in.}
        \label{fig:image_samples_celeb}
\end{figure}

\paragraph{FFHQ dataset} Additional random (i.e., no cherry-picking) generated images from a style-based S-IntroVAE trained on FFHQ at resolution 256$\times$256 are presented in Figure \ref{fig:apndx_ffhq_samples} (FID: 17.55) and random reconstructions of unseen data during training are presented in Figure \ref{fig:apndx_ffhq_recons}.

\paragraph{LSUN Bedroom} LSUN Bedroom is a subset of the larger LSUN \cite{yu15lsun} dataset, and includes a training set of 3,033,042 images of different bedrooms. We train a style-based S-IntroVAE at a resolution of 128$\times$128. Samples from the trained model are presented in Figure \ref{fig:apndx_bedroom_samp}, and we report FID of 15.88.

\begin{center}
\begin{figure}
    \centering
    \includegraphics[width=0.8\textwidth]{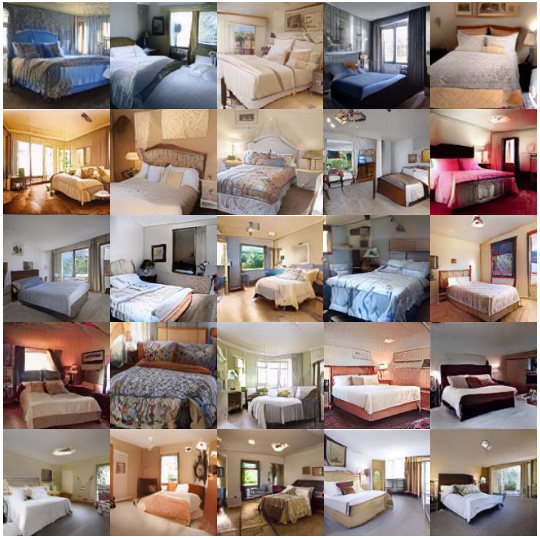}
    \caption{Samples a style-based S-IntroVAE trained on LSUN Bedroom at 128$\times$128 resolution (FID: 15.88).}
    \label{fig:apndx_bedroom_samp}
\end{figure}
\end{center}

\begin{center}
\begin{figure}
    \centering
    \includegraphics[width=0.8\textwidth]{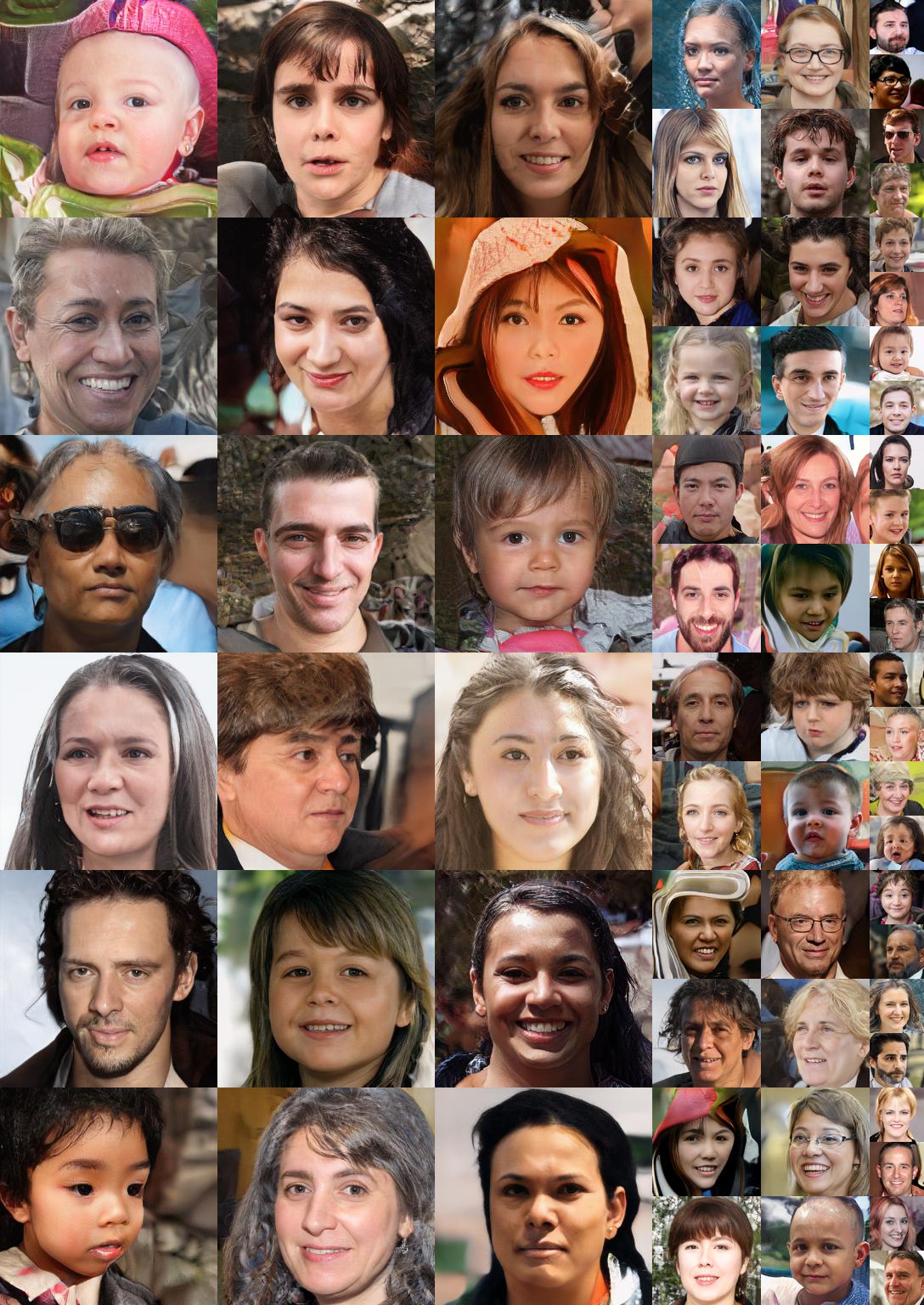}
    \caption{Generated samples from a style-based S-IntroVAE trained on FFHQ at 256x256 resolution (FID: 17.55).}
    \label{fig:apndx_ffhq_samples}
\end{figure}
\end{center}

\begin{center}
\begin{figure}
    \centering
    \includegraphics[width=0.8\textwidth]{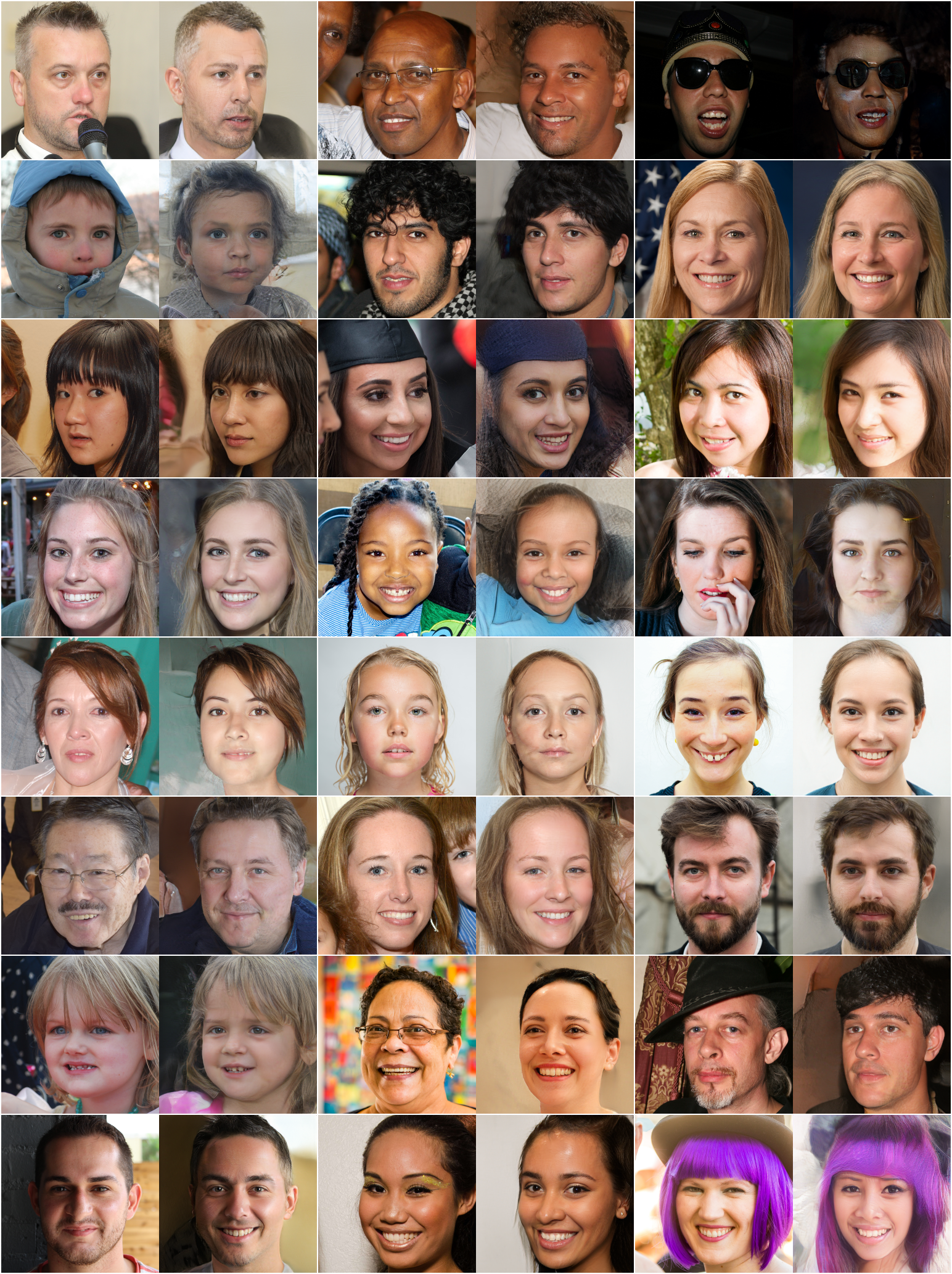}
    \caption{Reconstructions of test data from a style-based S-IntroVAE trained on FFHQ at 256x256 resolution (left: real, right: reconstruction).}
    \label{fig:apndx_ffhq_recons}
\end{figure}
\end{center}

\subsubsection{Interpolation in the Latent Space}
\label{apndx_interpolation}
One of the desirable properties of VAEs is the continuous learned latent space. 
Figure \ref{fig:apndx_celeb_interpolation} shows smooth interpolation between the latent vectors of four images from S-IntroVAE trained on the CelebA-HQ dataset. The interpolation is performed as follows: the four images are encoded to the latent space, and the latent codes serve as the corners of a square. We then perform 7-step linear interpolation between the latent codes of the corner images, such that each intermediate code is a mixture of the corner latent code, depending on the location on the grid. The intermediate latent codes are then decoded to produce the images comprising the square. Mathematically, let $z_a, z_b, z_c$ and $z_d$ denote the latent codes of images $X_a, X_b, X_c$ and $X_d$, respectively. The intermediate latent code $z_m$ is constructed as follows: $$ z_m = z_a\cdot (1 - \frac{i}{7})(1-\frac{j}{7}) + z_b \cdot \frac{j}{7}(1- \frac{i}{7}) +z_c \cdot (1- \frac{j}{7})\frac{i}{7} +z_d \cdot \frac{i}{7} \cdot \frac{j}{7}, $$
where $i, j = 1, .., 6$ denote the current location on the grid.

\begin{center}
\begin{figure}
    \centering
    \includegraphics[width=0.8\textwidth]{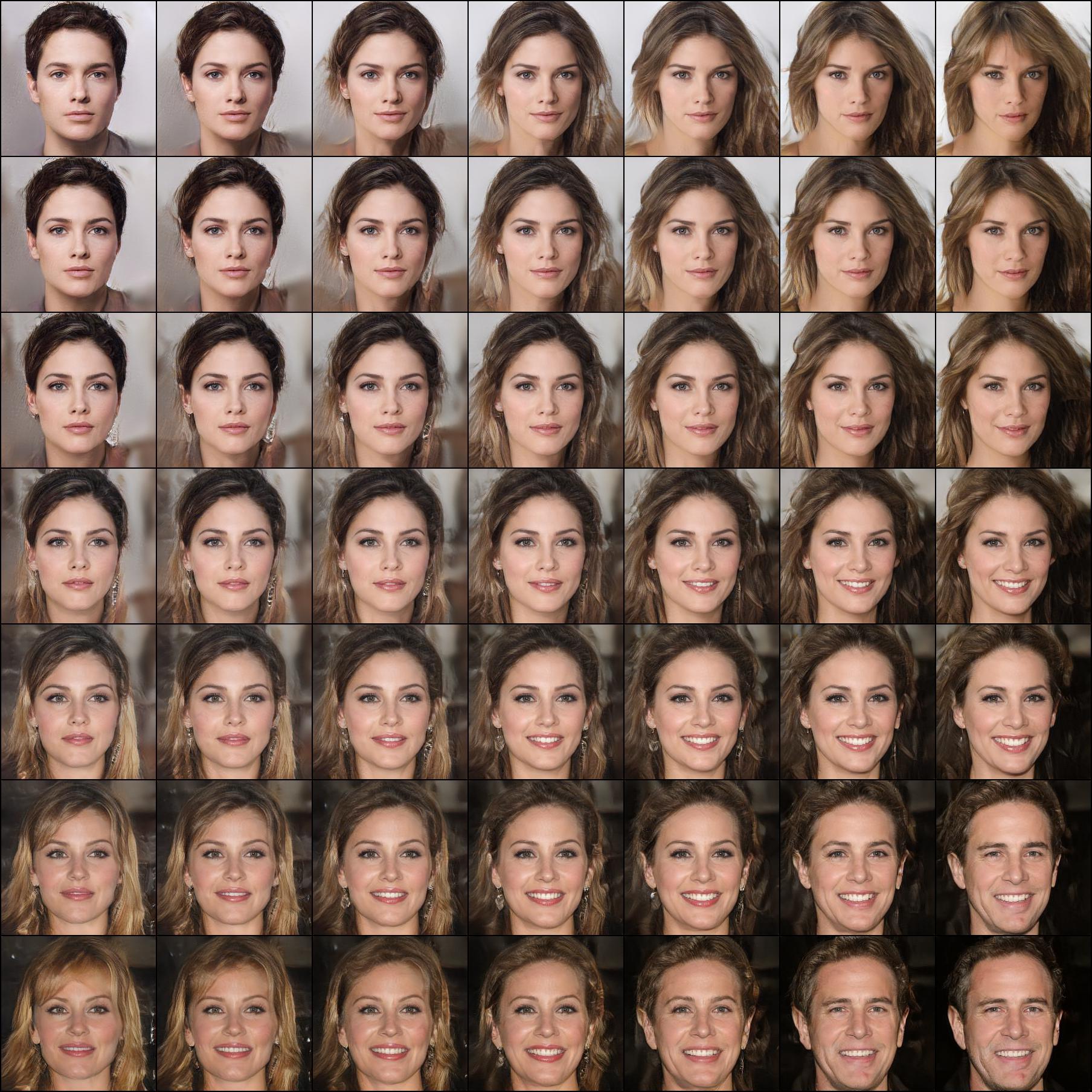}
    \caption{Interpolation in the latent space between four images, using a style-based S-IntroVAE trained on CelebA-HQ at 256x256 resolution.}
    \label{fig:apndx_celeb_interpolation}
\end{figure}
\end{center}

\subsubsection{Image Translation}
\label{apndx_translation}
We provide further image translation results for the Cars3D dataset in Figure \ref{fig:apndx_cars3d} and for the KTH dataset in Figure \ref{fig:apndx_kth}. The content transfer is performed as follows: for given two images, we encode both of them, and use the class latent code of the first one and the content latent code from the second one as input to the decoder. The output image should contain an object from the class of the first image (e.g., car model or person identity), with the content of the second (e.g. rotation or skeleton position).

\begin{center}
\begin{figure}
    \centering
    \includegraphics[width=0.5\textwidth]{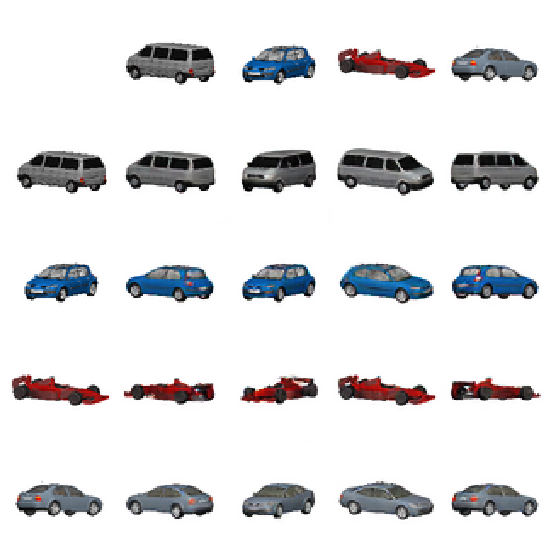}
    \caption{Qualitative results for content transfer on test data from the Cars3D dataset. The class is the car model, and the content is the rotation and azimuth.}
    \label{fig:apndx_cars3d}
\end{figure}
\end{center}

\begin{center}
\begin{figure}
    \centering
    \includegraphics[width=0.5\textwidth]{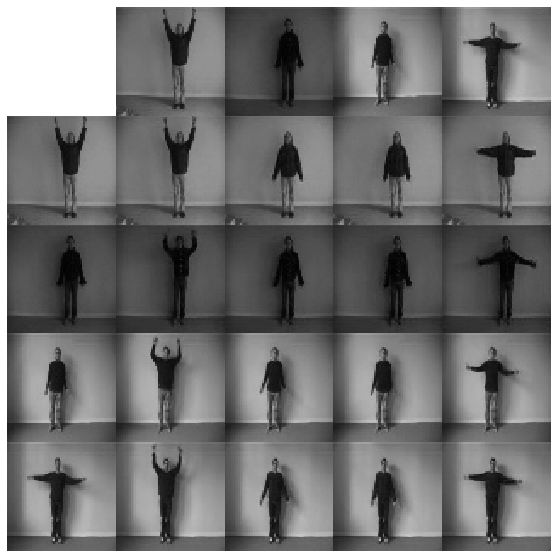}
    \caption{Qualitative results for content transfer on test data from the KTH dataset. The class is the person identity, and the content is the skeleton position.}
    \label{fig:apndx_kth}
\end{figure}
\end{center}

\subsection{Posterior Collapse}
\label{apndx:collapse}
\textit{Posterior collapse} \cite{Bowman_2016}, often occurs in image, text or autoregressive-based VAEs, happens when the approximate posterior distribution collapses onto the prior completely, that is, a trivial optimum is reached, a solution where the generator ignores the latent variable $z$ when generating $x$, and the KL term in the ELBO vanishes. Preventing posterior collapse has been addressed previously \cite{goyal2017zforcing, havrylov2020preventing, long2019preventing}, mainly by annealing the KL coefficient term ($\beta_{kl}$), adding auxiliary costs or changing the cost function altogether. \cite{gabbay2019demystifying} also noticed that when using a VAE formulation to train the specific disentanglement-oriented architecture on images, the KL term vanishes and the learned representations are uninformative. Empirically, posterior collapse can happen when the optimization of the VAE is more focused on the KL term, i.e., when $\beta_{kl} > \beta_{rec}$. Interestingly, we found that for the same $\beta_{kl}$ and $\beta_{kl}$, the expELBO term in the encoder's objective adds a 'repulsion' force that prevents this collapse. This is demonstrated on the 2D "8 Gaussians" dataset in Figure \ref{fig:2d_collapse}, where we train a standard VAE with $\beta_{kl}=1.0$ and $\beta_{rec}=0.5$, and a Soft-IntroVAE model with the same hyperparameters, but with $\beta_{neg}=5.0$. For the standard VAE, the KL term quickly vanishes during training, resulting in a trivial solution where the decoder ignores the latent variable $z$ when generating $x$. Moreover, \cite{gabbay2019demystifying} analysis showed that using a standard $\beta$-VAE for the image translation task results in sub-optimal results compared to a regular autoencoder due to the KL term vanishing. Our results on the image translation task show that with the added objectives of S-IntroVAE, it is possible to use a VAE for this task.

\begin{center}
\begin{figure}
    \centering
    \includegraphics[width=0.6\textwidth]{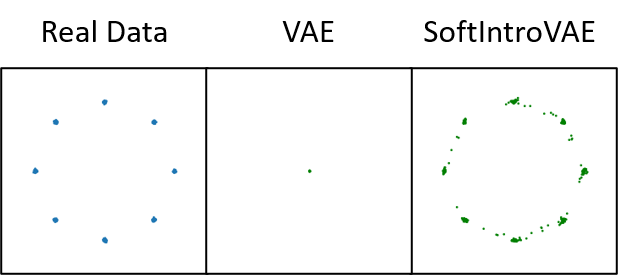}
    \caption{Demonstration of posterior collapse. Generated samples from trained models are shown, where both the standard VAE and S-IntroVAE were trained on the "8 Gaussians" 2D dataset with $\beta_{kl}=1.0$ and $\beta_{rec}=0.5$, and for S-IntroVAE, $\beta_{neg}=5.0$. For the standard VAE, the KL term vanishes, resulting in posterior collapse.}
    \label{fig:2d_collapse}
\end{figure}
\end{center}

\clearpage
\subsection{Theoretical Results}
\label{apndx:sec_theory}
In this section, we analyze the equilibrium of the S-IntroVAE model. We analyze the case of a general $\alpha\geq 1$, and the results in the main text for $\alpha=1$ are a special case. For the readers ease, we first recapitulate our definitions.
Recall that the encoder is represented by the approximate posterior distribution $q \doteq q(z|x)$ and that the decoder is represented using $d \doteq p_d(x|z)$. These are the controllable distributions in our generative model. The latent prior is denoted $p(z)$ and is not controlled. Slightly abusing notation, we also denote $p_d(x) = \mathbb{E}_{p(z)}[p_d(x|z)]$. The data distribution is denoted $p_{data}(x)$. For some distribution $p(x)$, let  $H(p) = -\mathbb{E}\left[ \log p(x) \right]$ denote its Shannon entropy.  

The ELBO, denoted $W(x;d,q)$, is given by:
\begin{equation}
    W(x;d,q) \doteq \mathbb{E}_{q(z|x)}\left[ \log p_d(x|z)\right] - KL(q(z|x) || p(z)).
\end{equation}
From the Radon-Nikodym Theorem~\cite{Capinski2004} of measure theory the following equality holds: 
\begin{equation}
\label{eq_rk_1}
\begin{split}
    \mathbb{E}_{z\sim p_z(z)}\left[\exp(\alpha W(D_{\theta}(z); d, q)) \right] =\mathbb{E}_{x\sim p_d(x)}\left[\exp(\alpha W(x; d, q)) \right],
\end{split}
\end{equation}
and similarly:
\begin{equation}
\label{eq_rk_2}
    \mathbb{E}_{z\sim p_z(z)}\left[W(D_{\theta}(z); d, q) \right] = \mathbb{E}_{x\sim p_d(x)}\left[W(x; d, q) \right].
\end{equation}
The ELBO satisfies the following property:
\begin{equation}\label{eq:elbo_lb}
    W(x;d,q) = \log p_d(x) - KL(q(z|x) || p_d(z|x)) \leq \log p_d(x).
\end{equation}
We consider a non-parametric setting, where $d$ and $q$ can be any distribution. For some $z$, let $D(z)$ denote a sample from $p_d(x|z)$. The objective functions for $q$ and $d$ are given by (note that we drop the dependence on $\theta,\phi$ because of the non-parametric setting):
\begin{equation}
\begin{split}
    \mathcal{L}_{E}(x,z) &=  W(x; d, q) - \frac{1}{\alpha} \exp(\alpha W(D(z)); d, q), \\
    \mathcal{L}_{D}(x,z) &= W(x; d, q) + \gamma W(D(z); d, q),
\end{split}
\end{equation}
where $\alpha\geq 1 $ and $\gamma \geq 0$ are hyper-parameters. The complete S-IntroVAE objective, takes an expectation of the losses above over real and generated samples:
\begin{equation}\label{eq:losses}
    \begin{split}
        L_q(q,d) &= \mathbb{E}_{p_{data}}\left[ W(x;q,d)\right] - \mathbb{E}_{p_{d}}\left[ \alpha^{-1}\exp(\alpha W(x;q,d))\right], \\
        L_d(q,d) &= \mathbb{E}_{p_{data}}\left[ W(x;q,d)\right] + \gamma \mathbb{E}_{p_{d}}\left[ W(x;q,d)\right].
    \end{split}
\end{equation}
A Nash equilibrium point $(q^*,d^*)$ satisfies $L_q(q^*,d^*) \geq L_q(q,d^*)$ and 
$L_d(q^*,d^*) \geq L_d(q^*,d)$
for all $q,d$. Given some $d$, let $q^*(d)$ satisfy $L_q(q^*(d),d) \geq L_q(q,d)$ for all $q$. 

\begin{lemma}\label{lem:max_L_q}
If $p_{d}(x) \leq p_{data}(x)^{\frac{1}{\alpha + 1}}$ for all $x$ for which $p_{data}(x)>0$, we have that $q^*(d)$ satisfies $q^*(d)(z|x)=p_d(z|x)$, and $W(x;q^*(d),d) = \log p_d(x)$.
\end{lemma}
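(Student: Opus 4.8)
The plan is to exploit the fact that, for a fixed decoder $d$, the encoder objective $L_q(\cdot,d)$ decouples across $x$. Writing both expectations in \eqref{eq:losses} as sums over $x$, we get
$L_q(q,d)=\sum_x\big[p_{data}(x)\,W(x;q,d)-p_d(x)\,\alpha^{-1}\exp(\alpha W(x;q,d))\big]$,
and each summand depends on $q$ only through the conditional $q(\cdot\mid x)$. Since the constraint ``$q(\cdot\mid x)$ is a probability distribution'' is separate for each $x$, I would maximize $L_q(q,d)$ pointwise, optimizing each summand independently over $q(\cdot\mid x)$.

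For a fixed $x$, the key move is to reparametrize the optimization by the scalar $W\doteq W(x;q,d)$. By the ELBO identity \eqref{eq:elbo_lb}, $W(x;q,d)=\log p_d(x)-KL(q(z\mid x)\,\|\,p_d(z\mid x))$, so as $q(\cdot\mid x)$ ranges over all distributions, $W$ sweeps the interval $(-\infty,\log p_d(x)]$, attaining the right endpoint exactly when $q(z\mid x)=p_d(z\mid x)$ (Gibbs' inequality, with equality iff the distributions coincide). The summand then becomes the scalar function $f(W)=p_{data}(x)\,W-p_d(x)\,\alpha^{-1}\exp(\alpha W)$, and the lemma reduces to showing that $f$ is maximized at the right endpoint $W=\log p_d(x)$.

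I would finish with a concavity argument. Since $f''(W)=-p_d(x)\,\alpha\exp(\alpha W)<0$, $f$ is strictly concave with a unique unconstrained maximizer at the critical point $W^\star=\alpha^{-1}\log\big(p_{data}(x)/p_d(x)\big)$, where $f'(W)=p_{data}(x)-p_d(x)\exp(\alpha W)$ vanishes. The constrained maximizer over $W\le\log p_d(x)$ is the right endpoint precisely when $W^\star\ge\log p_d(x)$, and rearranging this inequality yields $p_{data}(x)\ge p_d(x)^{\alpha+1}$, i.e.\ $p_d(x)\le p_{data}(x)^{1/(\alpha+1)}$ — exactly the hypothesis. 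Under it, $f$ is strictly increasing up to the endpoint, so its maximizer is $W=\log p_d(x)$, which forces $KL(q(z\mid x)\,\|\,p_d(z\mid x))=0$, hence $q^*(d)(z\mid x)=p_d(z\mid x)$ and $W(x;q^*(d),d)=\log p_d(x)$, as claimed.

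The step where the hypothesis does the real work, and thus the main obstacle, is the inequality $W^\star\ge\log p_d(x)$: without the assumed closeness of $p_d$ to $p_{data}$, the unconstrained optimum $W^\star$ can lie strictly below $\log p_d(x)$, in which case $f$ prefers an interior $W<\log p_d(x)$ and the optimal encoder is pushed strictly away from the true posterior. I would also flag the degenerate points where $p_{data}(x)=0$: there the linear term vanishes, $f$ is monotonically decreasing, and the pointwise optimum sends $W\to-\infty$; these lie outside the support of $p_{data}$ and do not affect the statement, which concerns $x$ with $p_{data}(x)>0$, but I would remark on them to keep the pointwise optimization rigorous over the whole space.
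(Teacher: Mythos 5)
Your proof is correct and takes essentially the same route as the paper's: both decompose $L_q$ pointwise over $x$, use the identity $W(x;q,d)=\log p_d(x)-KL(q(z|x)\,\|\,p_d(z|x))$ to reduce each summand to a scalar function of linear-minus-exponential form, and observe that the hypothesis $p_d(x)\le p_{data}(x)^{1/(\alpha+1)}$ is precisely the condition for the constrained maximum to sit at the boundary $KL=0$; your $f(W)$ is the paper's $g(y)$ (up to the positive factor $p_{data}(x)$ and an additive constant) under the change of variables $W=\log p_d(x)+y$. Incidentally, your treatment of points with $p_{data}(x)=0$ is more careful than the paper's, which asserts the maximum there is attained at $KL=0$ when in fact the summand $-\alpha^{-1}p_d^{\alpha+1}(x)\exp(-\alpha KL)$ increases as $KL\to\infty$ — a slip that is immaterial for the lemma's claim on the support of $p_{data}$, exactly as you note.
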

\begin{proof}
Plugging \eqref{eq:elbo_lb} in \eqref{eq:losses} we have that: 
\begin{equation}
\begin{split}
     L_q(q,d) &= \mathbb{E}_{p_{data}}\left[ \log p_d(x) - KL(q(z|x) || p_d(z|x))\right] -\frac{1}{\alpha} \mathbb{E}_{p_{d}}\left[ \exp(\alpha\log p_d(x) - \alpha KL(q(z|x) || p_d(z|x)))\right] \\
     &= \mathbb{E}_{p_{data}}\left[ \log p_d(x) -  KL(q(z|x) || p_d(z|x))\right] - \frac{1}{\alpha}\mathbb{E}_{p_{d}}\left[ p_d^{\alpha}(x)\exp(- \alpha KL(q(z|x) || p_d(z|x)))\right] \\
     &= \sum_x p_{data}(x)(\log p_d(x) - KL(q(z|x) || p_d(z|x))) - \frac{1}{\alpha} p_{d}^{\alpha+1}(x)\exp(- \alpha KL(q(z|x) || p_d(z|x))).
\end{split}
\end{equation}
Consider some $x$ for which $p_{data}(x)>0$. We have that $q^*(d)(z|x)$ is the maximizer of
\begin{equation}
\begin{split}
     & p_{data}(x)\left(\log p_d(x) -  KL(q(z|x) || p_d(z|x)) -\frac{1}{\alpha} \cdot \frac{p_{d}^{{\alpha+1}}(x)}{p_{data}(x)}\exp(- \alpha  KL(q(z|x) || p_d(z|x)))\right). \\
\end{split}
\end{equation}
Consider now the function $g(y) = y - \frac{a}{\alpha} \exp(\alpha y)$. We have that $g'(y)=1-a \exp(\alpha y)$, and therefore the function obtains a maximum at $y=-\frac{1}{\alpha}\log (a)$. In our case, $a=\frac{p_{d}^{\alpha+1}(x)}{p_{data}(x)}$ and $y=- KL(q(z|x) \| p_d(z|x)) \leq 0$. Therefore, if $\frac{ p_{d}^{\alpha+1}(x)}{p_{data}(x)} > 1$, then the maximum is obtained for $- KL(q(z|x) || p_d(z|x)) = -\frac{1}{\alpha}\log\left( \frac{ p_{d}^{\alpha+1}(x)}{p_{data}(x)} \right)$, and if $\frac{ p_{d}^{\alpha+1}(x)}{p_{data}(x)} \leq 1$ then the maximum is obtained for $- KL(q(z|x) || p_d(z|x))=0$.

For $x$ such that $p_{data}(x)=0$, we have that $q(z|x)$ is the maximizer of $-\frac{1}{\alpha}\cdot p_{d}^{\alpha+1}(x)\exp(- \alpha KL(q(z|x) || p_d(z|x)))$. Since $KL(\cdot || \cdot) \geq 0$ and $p_{d}^{\alpha+1}(x)\geq 0$, a maximum is obtained for $- KL(q(z|x) || p_d(z|x))=0$.
Thus, given the assumption in the Lemma, for every $x$ the maximum is obtained for $KL(q(z|x) || p_d(z|x))=0$.
\end{proof}

Define $d^*$ as follows:
\begin{equation}\label{eq:pdstar_apndx}
    d^* \in \argmin_{d} \left\{KL(p_{data} || p_{d}) + \gamma H(p_{d}(x)) \right\},
\end{equation}
where $H(\cdot)$ is the Shannon entropy.
We make the following assumption.
\begin{assumption}\label{ass:ass_1_alpha}
For all $x$ such that $p_{data}( x)>0$ we have that $p_{d^*}(x) \leq p_{data}(x)^{\frac{1}{\alpha + 1}}$.
\end{assumption}
For $\alpha=1$, we get that Assumption \ref{ass:ass_1_alpha} is equivalent to Assumption \ref{ass:ass_1} in the main text.
We now claim that the equilibrium point of the optimization in \eqref{eq:losses} is $(q^*(d^*),d^*)$ as defined in \eqref{eq:pdstar_apndx}.

\begin{theorem}
Denote $q^* = p_{d^*}(z|x)$, with $d^*$ defined in \eqref{eq:pdstar_apndx}, and let Assumption \ref{ass:ass_1_alpha} hold.
Then $\left(q^*,  {d^*}\right)$ is a Nash equilibrium of \eqref{eq:losses}.
\end{theorem}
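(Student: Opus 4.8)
The plan is to verify directly the two best-response inequalities that define the Nash equilibrium: the encoder condition $L_q(q^*,d^*) \geq L_q(q,d^*)$ for all $q$, and the decoder condition $L_d(q^*,d^*) \geq L_d(q^*,d)$ for all $d$, working from the objectives in \eqref{eq:losses}.

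The encoder condition follows immediately from Lemma \ref{lem:max_L_q}. I would apply that lemma with $d = d^*$: Assumption \ref{ass:ass_1_alpha} is precisely the hypothesis $p_{d^*}(x) \leq p_{data}(x)^{1/(\alpha+1)}$ needed, so the maximizer $q^*(d^*)$ of $L_q(\cdot,d^*)$ satisfies $q^*(d^*)(z|x) = p_{d^*}(z|x) = q^*$. Hence $q^*$ is a best response to $d^*$, and moreover the ELBO becomes tight, $W(x;q^*,d^*) = \log p_{d^*}(x)$, a fact I will reuse below.

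For the decoder condition I would hold the encoder fixed at $q^* = p_{d^*}(z|x)$ and bound $L_d(q^*,d)$ from above for every $d$ using the ELBO inequality \eqref{eq:elbo_lb}, $W(x;q^*,d) \leq \log p_d(x)$. Since $\gamma \geq 0$ and both expectations are over nonnegative measures, this gives
\[
L_d(q^*,d) \leq \mathbb{E}_{p_{data}}[\log p_d(x)] + \gamma\, \mathbb{E}_{p_d}[\log p_d(x)] = -H(p_{data}) - \big(KL(p_{data}\|p_d) + \gamma H(p_d)\big),
\]
where I used $\mathbb{E}_{p_{data}}[\log p_d(x)] = -H(p_{data}) - KL(p_{data}\|p_d)$ and $\mathbb{E}_{p_d}[\log p_d(x)] = -H(p_d)$. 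In the non-parametric setting the induced marginal $p_d(x)$ ranges over all distributions (e.g.\ by taking $p_d(x|z)$ independent of $z$), so by the definition \eqref{eq:pdstar_apndx} of $d^*$ the right-hand side is maximized exactly at $d = d^*$. Finally, at $d = d^*$ the sandwich closes: because $q^*$ is the true posterior of $d^*$, the ELBO is tight and $L_d(q^*,d^*) = -H(p_{data}) - (KL(p_{data}\|p_{d^*}) + \gamma H(p_{d^*}))$ attains the supremum of the upper bound. Thus $L_d(q^*,d) \leq L_d(q^*,d^*)$ for all $d$.

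The main obstacle is the decoder step, and it is conceptual rather than computational: one must recognize that relaxing the ELBO to its upper bound $\log p_d(x)$ turns the decoder objective into precisely the functional minimized in \eqref{eq:pdstar_apndx}, and then observe that this relaxation is exact at $d^*$ because the frozen encoder $q^*$ is the posterior of $d^*$. The inequality holds for all $d$ but becomes an equality only at $d^*$, which is exactly what makes $d^*$ a best response. Assumption \ref{ass:ass_1_alpha} is what guarantees that the same $q^*$ is simultaneously the encoder best response, so that both conditions hold at the single pair $(q^*,d^*)$.
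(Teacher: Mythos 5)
Your proof is correct and takes essentially the same route as the paper's: the encoder best response comes from Lemma \ref{lem:max_L_q} applied at $d^*$ under Assumption \ref{ass:ass_1_alpha}, and the decoder best response comes from relaxing $W(x;q^*,d)$ to $\log p_d(x)$ (i.e., dropping the nonnegative posterior-KL gap terms), recognizing the resulting functional as the one minimized in \eqref{eq:pdstar_apndx}, and observing that the bound is tight at $d=d^*$ because $q^*$ is its true posterior. The paper's proof keeps those KL gap terms explicit and notes they vanish at $d^*$, which is the same sandwich argument you phrase as inequality plus tightness.
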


\begin{proof}
From Lemma \ref{lem:max_L_q} we have that $q^*(d^*)(z|x) = p_{d^*}(z|x)$.

Let $d$ be some decoder parameters (i.e., $p_{d}(x|z)$).
From \eqref{eq:elbo_lb} we have that $W(x;q^*(d),d) = \log (p_{d}(x)) - KL(q^*(z|x) \| p_d(z|x))$.
Now, we have that 
\begin{equation}\label{eq:dec_loss_2}
\begin{split}
    L_d(q^*(d),d) &= \mathbb{E}_{p_{data}}\left[ W(x;q^*(d),d)\right] + \gamma \mathbb{E}_{p_{d}}\left[ W(x;q^*(d),d)\right] \\
    &= \mathbb{E}_{p_{data}}\left[ \log (p_{d}(x)) - KL(q^*(z|x) \| p_d(z|x)) \right] + \gamma \mathbb{E}_{p_{d}}\left[ \log (p_{d}(x)) - KL(q^*(z|x) \| p_d(z|x))\right] \\
    &= -KL(p_{data} \| p_{d}) + \mathbb{E}_{p_{data}}\left[ \log (p_{data}(x)) \right] -\gamma H(p_{d}(x))\\
    &- \mathbb{E}_{p_{data}}\left[KL(q^*(z|x) \| p_d(z|x)) \right] -\gamma \mathbb{E}_{p_{d}}\left[ KL(q^*(z|x) \| p_d(z|x))\right].
\end{split}
\end{equation}
Since $KL(q^*(d) \| p_d(z|x)) \! \geq \! 0 \! = \! KL(q^*(d^*) \| p_{d^*}(z|x))$, and $p_{d^*} = \argmin_{p_{d}} \left\{KL(p_{data} \| p_{d}) + \gamma H(p_{d}(x)) \right\}$, we have that $d^* \in \argmax_d L_d(q^*(d),d)$. Also, since $KL(q^* \| p_d(z|x))  \geq  0  =  KL(q^* \| p_{d^*}(z|x))$, we have that $d^* \in \argmax_d L_d(q^*,d)$.
We conclude that $(q^*,d^*)$ is a Nash equilibrium of \eqref{eq:losses}.
\end{proof}

Theorem \ref{thm:equilibrium} assumes that  $p_{d^*}(x) \leq p_{data}(x)^{\frac{1}{\alpha + 1}}$ for all $x$. We now claim that for any $p_{data}$, there exists some $\gamma>0$ such that this assumption holds.

\begin{theorem}
For any $p_{data}$, there exists $\gamma>0$ such that $p_{d^*}$, as defined defined in \eqref{eq:pdstar_apndx}, satisfies Assumption \ref{thm:equilibrium}.
\end{theorem}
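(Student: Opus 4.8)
The plan is to show that the entropy-regularised minimiser $p_{d^*}$ collapses to $p_{data}$ as $\gamma\downarrow 0$, and to exploit the fact that the target inequality $p_{d^*}(x)\le p_{data}(x)^{1/(\alpha+1)}$ holds with \emph{strict} slack at the unregularised optimum $p_{data}$. Writing $F_\gamma(p_d)\doteq KL(p_{data}\| p_d)+\gamma H(p_d)$ for the objective in \eqref{eq:pdstar_apndx}, the whole argument reduces to a uniform bound $\sup_x |p_{d^*}(x)-p_{data}(x)|\to 0$ as $\gamma\to 0$, together with an elementary gap estimate for the map $t\mapsto t^{1/(\alpha+1)}$ on $[0,1]$.

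First I would obtain the key a priori estimate by testing the optimality of $d^*$ against a decoder realising $p_d=p_{data}$ (available in the non-parametric setting). Since $KL(p_{data}\|p_{data})=0$, we have $F_\gamma(p_{data})=\gamma H(p_{data})$, so optimality of $p_{d^*}$ gives
\[
    KL(p_{data}\| p_{d^*})+\gamma H(p_{d^*}) \;=\; F_\gamma(p_{d^*}) \;\le\; \gamma H(p_{data}).
\]
Using $H(p_{d^*})\ge 0$ this yields $KL(p_{data}\| p_{d^*})\le \gamma H(p_{data})$. Pinsker's inequality then controls the distribution pointwise:
\[
    |p_{d^*}(x)-p_{data}(x)| \;\le\; \sqrt{2\,KL(p_{data}\|p_{d^*})} \;\le\; \sqrt{2\gamma H(p_{data})} \qquad \text{for every } x.
\]
On a finite discrete space $H(p_{data})<\infty$, so the right-hand side tends to $0$ with $\gamma$; and if $H(p_{data})=0$ then $p_{data}$ is a point mass and the claim is immediate, so I may assume $H(p_{data})>0$.

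Next I would quantify the slack in the desired inequality. For any support point $x$ with $0<p_{data}(x)<1$, the exponent $1/(\alpha+1)\in(0,1)$ (recall $\alpha\ge 1$) gives $p_{data}(x)^{1/(\alpha+1)}>p_{data}(x)$, so the gap $\delta_x\doteq p_{data}(x)^{1/(\alpha+1)}-p_{data}(x)$ is strictly positive; taking $\delta\doteq\min\{\delta_x : p_{data}(x)\in(0,1)\}$ over the finite support yields $\delta>0$. Choosing $\gamma$ small enough that $\sqrt{2\gamma H(p_{data})}\le \delta$, the estimate above gives $p_{d^*}(x)\le p_{data}(x)+\delta\le p_{data}(x)^{1/(\alpha+1)}$ for every such point, which is exactly Assumption \ref{ass:ass_1_alpha}. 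The only remaining case, $p_{data}(x)=1$ for some $x$, forces $p_{data}$ to be a point mass, and then $p_{d^*}(x)\le 1=p_{data}(x)^{1/(\alpha+1)}$ holds trivially.

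The main obstacle is making the convergence $p_{d^*}\to p_{data}$ rigorous, and Pinsker's inequality is what makes this clean and quantitative, sidestepping any appeal to continuity of the $\argmin$ map. A minor point to watch is that $\delta>0$ relies on the support being finite, so that a minimum of strictly positive gaps is attained and stays positive; for countably infinite support one would instead invoke lower semicontinuity of $q\mapsto KL(p_{data}\|q)$ together with compactness of the probability simplex to argue that every subsequential limit of minimisers equals $p_{data}$, yielding a non-quantitative version of the same conclusion.
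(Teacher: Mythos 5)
Your proof is correct, and it shares the paper's overall skeleton (a KL bound on $p_{d^*}$ versus $p_{data}$, converted to a pointwise bound via Pinsker, then absorbed into the slack of $t\mapsto t^{1/(\alpha+1)}$ on $[0,1]$), but it replaces the paper's key analytic input with a more elementary one. The paper establishes smallness of $KL(p_{data}\,\|\,p_{d^*})$ indirectly: it invokes the Milgrom--Segal envelope theorem to get continuity of the value function $V(\gamma)=\min_d\{KL(p_{data}\|p_d)+\gamma H(p_d)\}$ at $\gamma=0$, and then argues non-quantitatively that for every $\epsilon>0$ some $\gamma$ achieves $V(\gamma)\le\epsilon$, hence $KL(p_{data}\|p_{d^*})\le\epsilon$. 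You instead test optimality of $d^*$ against the feasible decoder realizing $p_{data}$ itself, which immediately gives $KL(p_{data}\|p_{d^*})\le\gamma H(p_{data})$ --- in effect proving the linear upper bound $V(\gamma)\le V(0)+\gamma H(p_{data})$ that makes the envelope theorem unnecessary. This buys you a self-contained argument with an explicit threshold (any $\gamma\le\delta^2/\bigl(2H(p_{data})\bigr)$ works, with $\delta$ your minimal gap), whereas the paper's route only asserts existence of a suitable $\gamma$. The two proofs share the same unstated limitation: the positivity of the minimal gap $\min_{x:p_{data}(x)>0}\bigl\{p_{data}(x)^{1/(\alpha+1)}-p_{data}(x)\bigr\}$ requires finite support (over countable support the infimum can vanish as $p_{data}(x)\to 0$); you flag this explicitly and sketch a workaround, while the paper leaves it implicit in its restriction to discrete distributions. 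Your Pinsker constant $\sqrt{2\,KL}$ is looser than the sharp $\sqrt{KL/2}$ used by the paper, but this is harmless since constants play no role in the limit argument.
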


\begin{proof}
We will show that for $\gamma=0$ the condition holds, and that $p_{d^*}$ is continuous in $\gamma$. 

Since $\alpha \geq 1$, for $\gamma=0$ we have that $p_{d^*}=p_{data}$. Therefore $\frac{( p_{d^*}(x))^{\alpha+1}}{p_{data}(x)} =  p_{d^*}^{\alpha}(x) \leq 1$.\footnote{The condition $p_d(x)\leq 1$ is obvious for discrete distributions. For continuous distributions, it is satisfied in a differential sense $p_d(x)dx\leq 1$, since $\int_x p_d(x)dx = 1$ and $p_d(x)\geq 0$.}

By Theorem 2 of Milgrom and Segal~\cite{milgrom2002envelope} (the Envelope theorem) we have that the value function $V(\gamma) = \min_{d} \left\{KL(p_{data} \| p_{d}) + \gamma H(p_{d}(x)) \right\}$ is continuous in $\gamma$.  Therefore, for every $\epsilon>0$ there exists some $\gamma$ for which $V(\gamma) - V(0) \leq \epsilon$, which yields
\begin{equation}\label{eq:proof_1}
\begin{split}
     &\min_{d} \left\{KL(p_{data} \| p_{d}) + \gamma H(p_{d}(x)) \right\} - \min_{d} \left\{KL(p_{data} \| p_{d})\right\} \\
     &= \min_{d} \left\{KL(p_{data} \| p_{d}) + \gamma H(p_{d}(x)) \right\}\leq \epsilon.
\end{split}
\end{equation}
Let $d^*$ satisfy $d^* \in \argmin_{d} \left\{KL(p_{data} \| p_{d}) + \gamma H(p_{d}(x)) \right\}$. Since the entropy $H$ is non-negative, we have from \eqref{eq:proof_1} that 
$$
KL(p_{data} \| p_{d^*}) \leq \epsilon.
$$
Let $D(p_{data} \| p_{d^*}) = \sup_x |p_{data}(x) - p_{d^*}(x) |$ denote the total variation distance. From Pinsker's inequality we have that
\begin{equation}\label{eq:proof_2}
    D(p_{data} \| p_{d^*}) \leq \sqrt{0.5 KL(p_{data} \| p_{d^*})} \leq \sqrt{0.5 \epsilon}.
\end{equation}
Choose $\epsilon$ such that 
\begin{equation}\label{eq:proof_3}
    \sqrt{0.5 \epsilon} \leq \min_{x: p_{data}(x)>0} \left\{ -p_{data}(x) + p_{data}(x)^{\frac{1}{\alpha + 1}}\right\},
\end{equation}
and note that since $p_{data}(x)\leq 1$ and $\alpha \geq 1$, then $-p_{data}(x)+p_{data}(x)^{\frac{1}{\alpha + 1}}\geq 0$, and therefore we can find an $\epsilon>0$ that satisfies \eqref{eq:proof_3}. We thus have that for any $x$ such that $p_{data}(x)>0$:
\begin{equation}
    p_{d^*}(x) \leq p_{data}(x) + \sqrt{0.5 \epsilon} \leq p_{data}(x)^{\frac{1}{\alpha + 1}},
\end{equation}
where the first inequality is from the definition of the total variation distance and \eqref{eq:proof_2}, and the second inequality is by \eqref{eq:proof_3}.
\end{proof}

\subsection{Out-of-Distribution Detection Experiment}
\label{sec:apndx_ood}
One common application of likelihood-based generative models is detecting novel data, or out-of-distribution (OOD) detection~\cite{nalisnick2018deep, choi2018waic, zisselman2020deep}. Typically in an unsupervised setting, where only in-distribution data is seen during training, the inference modules in these models are \textit{expected} to assign in-distribution data high likelihood, while OOD data should have low likelihood. Surprisingly, Nalisnick et al.~\cite{nalisnick2018deep} showed that for some image datasets, density-based models, such as VAEs and flow-based models, cannot distinguish between images from different datasets, when trained only on one of the datasets. Evidently, \cite{nalisnick2018deep} showed this phenomenon occurs when pairing several popular datasets such as FashionMNIST vs MNIST and SVHN vs CIFAR-10. Modelling the score of each data point by the ELBO, we perform a similar experiment and measure the OOD detection performance by the area under the receiver operating characteristic curve (AUROC) of both the standard VAE and Soft-IntroVAE. Our results, reported in Table \ref{tab:ood}, show that for the CIFAR10-SVHN pair, the standard VAE performs poorly, confirming the results of \cite{nalisnick2018deep}, while Soft-IntroVAE outperforms it by a large margin. 
Motivated by our findings, we posit that by exploring better generative models, the likelihood-based approach for OOD detection may provide promising results.

\begin{center}
    \begin{table}[]
    \centering
    \begin{tabular}{|l|l|l|l|l|l|}
    \hline
         \textbf{In-distribution (train)} & \textbf{ Out-of-distribution (test)}  & VAE & Soft-IntroVAE \\ \hline
    MNIST & FashionMNIST & 0.992 $\pm$ 0.002 & \textbf{0.999 $\pm$ 0.0002}  \\ \hline
    FashionMNIST  & MNIST & 0.996 $\pm$ 0.0009 & \textbf{0.999 $\pm$ 0.0004} \\ \hline
    CIFAR10  & SVHN & 0.378 $\pm$ 0.01 & \textbf{0.9987 $\pm$ 0.008} \\ \hline 
    SVHN & CIFAR10 & 0.936 $\pm$ 0.003 & \textbf{0.966 $\pm$ 0.02 }  \\ \hline
    \end{tabular}
    \caption{Comparison of AUROC scores for OOD, where the in-distribution (train) and out-of-distribution (test) are different datasets and the ELBO is used for the score threshold. Results are averaged over 3 seeds. }
    \label{tab:ood}
\end{table}
\end{center}

\end{document}